\documentclass[10pt]{article} % For LaTeX2e
%\usepackage{tmlr}
% If accepted, instead use the following line for the camera-ready submission:
\usepackage[accepted]{tmlr}
% To de-anonymize and remove mentions to TMLR (for example for posting to preprint servers), instead use the following:
%\usepackage[preprint]{tmlr}

% Optional math commands from https://github.com/goodfeli/dlbook_notation.
%%%%% NEW MATH DEFINITIONS %%%%%

\usepackage{amsmath,amsfonts,bm}

% Mark sections of captions for referring to divisions of figures

% Highlight a newly defined term

% Figure reference, lower-case.

% Figure reference, capital. For start of sentence

% Section reference, lower-case.

% Section reference, capital.

% Reference to two sections.

% Reference to three sections.

% Reference to an equation, lower-case.
\def\eqref#1{equation~\ref{#1}}
% Reference to an equation, upper case

% A raw reference to an equation---avoid using if possible

% Reference to a chapter, lower-case.

% Reference to an equation, upper case.

% Reference to a range of chapters

% Reference to an algorithm, lower-case.

% Reference to an algorithm, upper case.

% Reference to a part, lower case

% Reference to a part, upper case

\def\1{\bm{1}}

% Random variables

% rm is already a command, just don't name any random variables m

% Random vectors

% Elements of random vectors

% Random matrices

% Elements of random matrices

% Vectors

% Elements of vectors

% Matrix

% Tensor
\DeclareMathAlphabet{\mathsfit}{\encodingdefault}{\sfdefault}{m}{sl}
\SetMathAlphabet{\mathsfit}{bold}{\encodingdefault}{\sfdefault}{bx}{n}

% Graph

% Sets

% Don't use a set called E, because this would be the same as our symbol
% for expectation.

% Entries of a matrix

% entries of a tensor
% Same font as tensor, without \bm wrapper

% The true underlying data generating distribution

% The empirical distribution defined by the training set

% The model distribution

% Stochastic autoencoder distributions

 % Laplace distribution

% Wolfram Mathworld says $L^2$ is for function spaces and $\ell^2$ is for vectors
% But then they seem to use $L^2$ for vectors throughout the site, and so does
% wikipedia.

 % See usage in notation.tex. Chosen to match Daphne's book.

\usepackage{hyperref}
\usepackage{url}
\usepackage{wrapfig,lipsum,booktabs}
\usepackage{standalone}
\usepackage{algorithm}
\usepackage{soul}
\usepackage{multirow}
\usepackage{caption}
\usepackage{algpseudocode}
\usepackage{physics}
\usepackage{tikz}
\usepackage{subcaption}
\usepackage{pgfplots}
\usepackage{comment}
\usepackage{fontawesome5}
\usepackage{graphicx}
\usepackage{bm, upgreek}
\usepackage[inline]{enumitem}
\usetikzlibrary {arrows.meta}
\usepackage{amsthm}
\usepackage{amsmath}
\newtheorem{theorem}{Theorem}[section]
\newtheorem{lemma}[theorem]{Lemma}

\newtheorem{corollary}[theorem]{Corollary}
\title{FIT-GNN: Faster Inference Time for GNNs that `FIT' in Memory Using Coarsening}
\pgfplotsset{compat=1.18}
% Authors must not appear in the submitted version. They should be hidden
% as long as the tmlr package is used without the [accepted] or [preprint] options.
% Non-anonymous submissions will be rejected without review.

\author{\name Shubhajit Roy\thanks{Equal contribution} \email royshubhajit@iitgn.ac.in \\
      Indian Institute of Technology Gandhinagar
      \AND
      \name Hrriday Ruparel\footnotemark[1] \email hrriday.ruparel@iitgn.ac.in \\
      Indian Institute of Technology Gandhinagar
      \AND
      \name Kishan Ved \email kishan.ved@iitgn.ac.in\\
      Indian Institute of Technology Gandhinagar
      \AND
      \name Anirban Dasgupta \email
      anirbandg@iitgn.ac.in \\
      Indian Institute of Technology Gandhinagar}

% The \author macro works with any number of authors. Use \AND 
% to separate the names and addresses of multiple authors.

  % Insert correct month for camera-ready version
 % Insert correct year for camera-ready version
 % Insert correct link to OpenReview for camera-ready version

\begin{document}

\maketitle

\begin{abstract}
Scalability of Graph Neural Networks (GNNs) remains a significant challenge. To tackle this, methods like coarsening, condensation, and computation trees are used to train on a smaller graph, resulting in faster computation. Nonetheless, prior research has not adequately addressed the computational costs during the inference phase. This paper presents a novel approach to improve the scalability of GNNs by reducing computational burden during the inference phase using graph coarsening. We demonstrate two different methods -- Extra Nodes and Cluster Nodes. Our study extends the application of graph coarsening for graph-level tasks, including graph classification and graph regression. We conduct extensive experiments on multiple benchmark datasets to evaluate the performance of our approach. Our results show that the proposed method achieves orders of magnitude improvements in single-node inference time compared to traditional approaches. Furthermore, it significantly reduces memory consumption for node and graph classification and regression tasks, enabling efficient training and inference on low-resource devices where conventional methods are impractical. Notably, these computational advantages are achieved while maintaining competitive performance relative to baseline models.
\end{abstract}
\section{Introduction}
Graph Neural Networks (GNNs) have demonstrated remarkable versatility and modeling capabilities. However, several challenges still hinder their widespread applicability, with scalability being the most significant concern. This scalability issue affects both the training and inference phases. Previous research has sought to mitigate the high training computation cost by using three broad categories of methods: \begin{enumerate*}
    \item Training every iteration on a sampled subgraph
    \item Training on a coarsened graph generated from a coarsening algorithm
    \item Training on a synthetic graph that mimics the original graph
\end{enumerate*}. Unfortunately, these smaller graphs can only be utilized for training purposes and have not alleviated the computational burden during inference since the method mentioned above requires the whole graph during the inference phase. Consequently, for larger graphs, not only do inference times increase, but memory consumption may also exceed limitations. 

To mitigate the high computational cost associated with the inference phase, we decompose the original graph $G$ into a collection of subgraphs $\mathcal{G}_s = \{G_1, G_2, \dots, G_k\}$. This transformation yields substantial improvements in inference efficiency, achieving up to a $\mathbf{100\times}$ speedup for single-node prediction. Moreover, it significantly reduces memory consumption by as much as $\mathbf{100\times}$ —thereby enabling scalable inference on large-scale graphs comprising millions of edges.
%, which are common in real-world applications. 
We provide a theoretical characterization of the upper bound on the expected size of subgraphs under which our method's time and space complexity remain superior to that of the baselines. Notably, we observe that conventional methods often fail to perform inference on large graphs due to memory constraints, whereas our approach remains effective and scalable in such scenarios.

While partitioning the graphs into subgraphs seems to be a naive transformation, it comes with the cost of information loss corresponding to the edges removed. We overcome this loss by appending additional nodes in the subgraphs using two different methods -- \textbf{Extra Nodes} and \textbf{Cluster Nodes} that respectively append neighboring nodes or representational nodes corresponding to neighboring clusters. These additional nodes resulted in equivalent or better performance in terms of accuracy and error compared to baselines. Our contributions are summarized as follows:
\vspace{-0.3cm}
\begin{enumerate}[leftmargin=*]
    \item We reduce inference time and memory by applying graph coarsening to partition graphs into subgraphs, mitigating information loss through \textbf{Extra Node} and \textbf{Cluster Node} strategies.
    \item  We provide a theoretical analysis of our approach, establishing a lower inference computational complexity compared to baselines.
    \item Our method scales well to extremely large graphs that baselines fail to process, while consistently delivering strong performance on 13 real-world datasets.
\end{enumerate}

\begin{figure}[ht]
    \centering
    \vspace{-0.675cm}
    \includegraphics[width=0.75\linewidth]{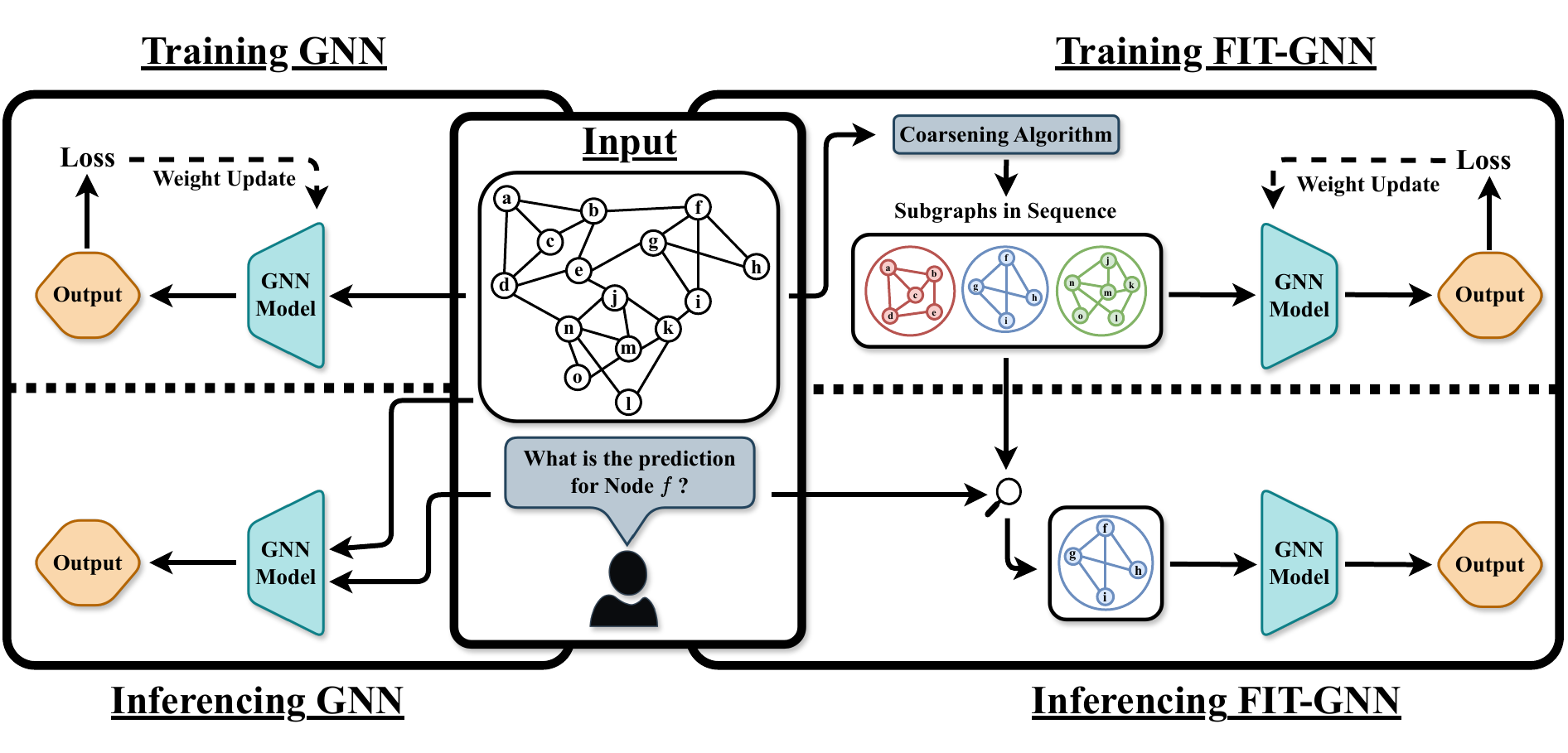}
    \vspace{-0.375cm}
    \caption{The Figure shows the overall pipeline of our proposed method and its comparison with traditional training and inference of GNNs. This pipeline is made for node-level tasks.}
    \label{fig:pipeline}
\end{figure}

\section{Related Work}
The aim of reducing computational costs has been approached through various methods. Research by \citet{10.1145/3292500.3330925} and \citet{Zeng2020GraphSAINT:} uses a subgraph sampling technique, where a small subgraph is sampled at each training iteration for GNN training. Other methods, including layer-wise sampling and mini-batch training, have been explored by \citet{chen2018stochastic,chen2018fastgcn,10.1145/3394486.3403192,zou2019layer}. Additionally, \citet{fahrbach2020faster} introduced a coarsening algorithm based on Schur complements to create node embeddings in large graphs, while \citet{10.1145/3447548.3467256} [\textbf{SGGC}] applied GNNs to coarsened graphs, reducing nodes and edges to lower training time. \cite{pmlr-v202-kumar23a, joly2024graph, NEURIPS2024_733209a1, 10.1145/3589335.3651920} also proposed different coarsening algorithms for scalable training approaches. \citet{jin2021graph} [\textbf{GCOND}], \citet{wang2024fast} [\textbf{GC-SNTK}] presented algorithms to generate a synthetic graph that mimics the original graph. However, their approach generates model-specific synthetic graphs that require training on the whole dataset, which defeats the purpose of reducing computational burden. \citet{gupta2025bonsai} [\textbf{BONSAI}] addressed this issue by looking into computational trees and training on just the relevant computational trees. However, these approaches do not decrease inference time since the inference phase takes the whole graph as input. \citet{xue2023sugar} shifted the training process to subgraph-level, using coarsening algorithms to split the graph into \( k \)-subgraphs and recommending a multi-device training approach. In the graph-level task, \citet{Jin_2022} [\textbf{DOSCOND}] and \citet{10.1145/3580305.3599398} [\textbf{KIDD}] tackled the graph classification problem following up on the approach of GCOND and GC-SNTK. However, the graph regression problem was not explored in this direction.

\section{Preliminaries} \label{sec:prelim}
\textbf{Graph Notations:} Given an undirected graph $G=(V,E,X,W)$, $V$ is the vertex set, $E$ is the edge set, $X\in \mathbb{R}^{n\times d}$ is the feature matrix, and $W$ is the edge weight matrix. Let $|V| = n$ be the number of nodes and $|E| = m$ be the number of edges. Let $A\in \mathbb{R}^{n\times n}$ be the adjacency matrix of $G$ where $A_{ij}$ is the edge weight between nodes $v_i$ and $v_j$, and $d_i$ is the degree of the node $v_i$. $D \in \mathbb{R}^{n\times n}$ is a diagonal degree matrix with $i$th diagonal element as $d_i$. We denote $\mathcal{N}_j(v_i)$ as the $j$-hop neighbourhood of node $v_i \in V$.
\textbf{Graph Neural Network:} Graph Neural Network is a neural network designed to work for graph-structured data. The representation of each node in the graph is updated recursively by aggregation and transforming the node representations of its neighbors. \citet{kipf2017semisupervised} introduced Graph Convolutional Network (GCN) as follows:
\begin{equation}\label{eq:gnn_propagation}
    X^{(l+1)} = \sigma(\Tilde{D}^{-\frac{1}{2}}\Tilde{A}\Tilde{D}^{-\frac{1}{2}}X^{(l)}\mathcal{W}^{(l)})
\end{equation}
where $X^{(l)}$ is the node representation after $l$ layers, $\Tilde{A} = A + I$, $\Tilde{D} = D + I$, $I$ is the identity matrix of same dimension as $A$. $\mathcal{W}^{(l)}$ is the learnable parameter and $\sigma$ is a non-linear activation function. 
%In this paper, we restrict to GCN as the GNN Layer. However, other GNN layers can be used, but they come with different computation costs.

\textbf{Graph Coarsening and Graph Partitioning: }\citet{loukas2018graphreductionspectralcut} discussed multiple coarsening algorithms which create a coarsened graph $G'=(V', E', X', W')$ from a given graph $G=(V,E,X,W)$. %$X'$ is the coarsened node representation. 
We refer the vertex set $V' = \{v'_1, v'_2, \dots, v'_k\}$ of $G'$ as \textit{coarsened nodes}. Given a coarsening ratio $r \in [0, 1]$, we have $k= \lfloor{n\times r} \rfloor$, $k\in \mathbb{Z}$. We can interpret it as creating $k$ disjoint partitions, $C_1, C_2, \dots, C_k$, from a graph of $n$ nodes. Mathematically, we create a partition matrix $P\in\{0,1\}^{n\times k}$, where $P_{ij}=1$ if and only if node $v_i\in C_j$. $X'=P^{\top}X$ is the coarsened node representation. $A' = P^{\top}AP$ is the adjacency matrix of $G'$. Similarly, the corresponding degree matrix $D' = P^{\top}DP$. SGGC \citep{10.1145/3447548.3467256} used normalized partition matrix $\mathcal{P} = PC^{-\frac{1}{2}}$, where $C$ is defined as a diagonal matrix with diagonal entries $C_{jj} = |C_j|, j=1,2,\dots, k$.

Along with $G'$, we create a set of disjoint subgraphs $\mathcal{G}_s=\{G_1, G_2, $ $\dots, G_k\}$ corresponding to the partitions $C_1, C_2, \dots, C_k$. The number of nodes in each partition $C_i$ is denoted by $n_i$. Each subgraph $G_i$ is the induced subgraph of $G$ formed by the nodes in $C_i$. $A_i$ and $D_i$ are the adjacency and degree matrix of $G_i$ respectively.
\begin{figure}[ht]
    \centering
    \resizebox{0.705\textwidth}{!}{\includestandalone{images/img2}} % Adjust size: 50% of text width
    \caption{Figure showing the comparison between the Extra Node Method and the Cluster Node Method of appending additional nodes in $G_1, G_2, G_3$.}
    \label{fig:additional_nodes}
\end{figure}
\vspace{-0.25cm}
\section{Methodology} \label{sec:methodology}
%\subsection{Training Phase} \label{subsec:training_phase}
In SGGC, a graph $G$ is reduced to $G'$ using a partition matrix $P$ generated from the coarsening algorithm. The labels for the coarsened graph are $Y' = \arg\max (P^{\top}Y)$, where $Y$ is the label matrix of $G$ storing the one-hot encoding of the label of each node for the node classification task. Training is carried out on the coarsened graph $G'$ using $Y'$ as the label information. While this approach is novel, essential label information is lost. For instance, suppose a node $v'\in G'$ is a combination of $v_i, v_j, v_k \in G$ where $v_i, v_j, v_k$ are test nodes. Let us assume that the classes of $v_i, v_j, v_k$ are $0, 0, 2$ respectively. $\arg\max$ will take the majority label and assign class $0$ to $v'$. Hence, the model will be trained to predict only the class $0$, leading to the discarding of the quantification of the model's performance on predicting less represented nodes.

Similarly, to explore another node-level task, such as node regression, aggregation functions such as the mean of regression targets wouldn't be appropriate because they lose the variance of the targets. To mitigate label information loss, we use original label information without any aggregation function like $\arg\max$ or mean, thus motivating us to follow a subgraph-level training process.

However, partitioning the graph into subgraphs leads to a loss of neighborhood information at the periphery of subgraphs. To tackle this, we append nodes in two ways after partitioning:
\begin{itemize}[leftmargin=*]    
%$\bullet$
\item \textbf{Extra Nodes:} Adding the $1$-hop neighbouring nodes, namely Extra Nodes, in each subgraph $G_i$ denoted by $\mathcal{E}_{G_i}$. \citet{xue2023sugar} introduced this approach to reduce information loss after partitioning. We add a unit weight edge if two nodes in $\mathcal{E}_{G_i}$ are connected in $G$. Node embedding of $v\in\mathcal{E}_{G_i}$ is $x_v\in X$.
\begin{equation}\label{eq:egi}
    \mathcal{E}_{G_i} = \bigcup_{v\in G_i} \left\{ u\in V:u\in \mathcal{N}_1(v) \wedge u \notin G_i\right\}
\end{equation}
\item \textbf{Cluster Nodes:} Although adding $1$-hop neighboring nodes in the subgraph helps recover lost information after partition, for multi-layered GNN models, information loss will persist. Instead of adding the neighboring nodes, we will add Cluster Nodes, which represent the neighboring clusters, denoted by $\mathcal{C}_{G_i}$. \citet{liu2024scalable} introduced this method to add nodes in the graph instead of adding to separate subgraphs. They also added cross-cluster edges among these Cluster Nodes and proposed a subgraph sampling-based method on the modified graph. In our work, we add cross-cluster edges. For each node $t\in\mathcal{C}_{G_i}$, the node embedding $x_t=X'_t$.
\begin{equation}\label{eq:cgi}
    \mathcal{C}_{G_i} = \bigcup_{v\in \mathcal{E}_{G_i}} \left\{ t: P_{v,t}\neq 0\right\}
\end{equation}
\end{itemize}

Figure \ref{fig:additional_nodes} illustrates appending these additional nodes using these approaches. The dashed line shows which nodes are added as part of these additional nodes. For subgraph-level training, specifically for node-level tasks, the newly appended nodes do not contribute to the weight update as the loss is not backpropagated based on the predictions of these nodes. This is done since the appended nodes might contain information from the test nodes. Therefore, a boolean array $mask_i$ is created for each subgraph $G_i$ such that $True$ is set for nodes that actually belong to the subgraph $G_i$ (not as an Extra Node or Cluster Node) and are training nodes; otherwise, $False$.
\begin{lemma}
\label{lemma:extra_node_good} 
    Models with $1$ layer of GNN cannot distinguish between $G$ and $\mathcal{G}_s$ when \textbf{Extra Nodes} method is used. (Proof in Appendix \ref{app:proof_lemma_extra_node_good})
\end{lemma}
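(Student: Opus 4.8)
The plan is to prove the stronger statement that, after augmenting each partition with its Extra Nodes, one layer of message passing assigns to every \emph{genuine} node of a partition exactly the embedding it would receive in the full graph $G$; since a $1$-layer model outputs nothing else, it cannot tell $G$ and $\mathcal{G}_s$ apart. Write $G_i^{+}$ for the subgraph $G_i$ together with $\mathcal{E}_{G_i}$ and the induced edges among all these nodes. A single GCN layer (Eq.~\ref{eq:gnn_propagation}) acts as $X^{(1)}=\sigma(SX\mathcal{W})$ with $S=\tilde{D}^{-1/2}\tilde{A}\tilde{D}^{-1/2}$, and the row $S_{v,\cdot}$ is supported exactly on $\{v\}\cup\mathcal{N}_1(v)$. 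So it suffices to verify, for each $v\in C_i$: (i) $\mathcal{N}_1(v)\subseteq V(G_i^{+})$; (ii) the edges and weights incident to $v$ are the same in $G_i^{+}$ as in $G$; and (iii) the normalization coefficients $S_{vv}$ and $\{S_{vu}\}_{u\sim v}$ agree. Granting these, the $v$-th row of $SX$ is assembled from identical data in both graphs.

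Steps (i) and (ii) follow directly from the definitions. By Eq.~\ref{eq:egi_cgi}, any neighbor $u$ of $v$ with $u\notin G_i$ is by construction placed in $\mathcal{E}_{G_i}$, so $\mathcal{N}_1(v)\subseteq V(G_i)\cup\mathcal{E}_{G_i}=V(G_i^{+})$. Every edge $\{v,u\}$ of $G$ with $v\in C_i$ is retained: if $u\in C_i$ it survives in the induced subgraph $G_i$, and if $u\in\mathcal{E}_{G_i}$ it is added as a $G_i$-to-$\mathcal{E}_{G_i}$ edge (and edges among extra nodes present in $G$ are kept as well). No edge incident to $v$ is deleted and none is created, so the features $X^{(0)}_u$ feeding into $v$ coincide, and in particular $\deg_{G_i^{+}}(v)=\deg_G(v)$, which pins down $S_{vv}$.

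The one genuinely delicate point, and the part I expect to be the main obstacle, is the symmetric factor $\tilde{D}_{uu}^{-1/2}$ attached to a \emph{neighbor} $u$ of $v$: if $u$ is an extra node, its degree inside $G_i^{+}$ can be strictly smaller than $d_u$, because some neighbors of $u$ lie two hops from $C_i$ and are therefore absent, which would perturb $S_{vu}$. I would dispose of this by recording each node's original degree at construction time and using it in the normalization (equivalently, one may phrase the lemma for any neighborhood-aggregation layer whose coefficient on a neighbor $u$ is independent of $u$'s own degree, such as a mean or sum aggregator); in either case $S_{vu}$ matches the value it has in $G$ for all $u\sim v$. Combining with (i)--(ii), every row of $SX$ restricted to genuine nodes is identical in $G$ and in $\mathcal{G}_s$, hence so is $X^{(1)}$, which establishes Lemma~\ref{lemma:extra_node_good}.

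Finally I would note why the argument is tight, since this both explains the ``$1$ layer'' hypothesis and motivates the rest of Section~\ref{sec:methodology}: with two layers the second aggregation at $v$ needs the \emph{correct} first-layer embeddings of the boundary extra nodes in $\mathcal{E}_{G_i}$, and computing those embeddings requires $2$-hop information around $C_i$ that is missing from $G_i^{+}$. This is precisely the gap the Cluster Nodes construction is designed to mitigate.
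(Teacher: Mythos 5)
Your proposal is correct, and it is in fact more careful than the paper's own argument, though the underlying idea is the same (the Extra Nodes exactly supply the 1-hop neighbours lost by partitioning). The paper's proof works purely at the level of information availability: it splits $V(G_i)$ into nodes whose whole 1-hop neighbourhood lies in $G_i$ and the remaining set $S_2$, defines the loss count $\mathcal{I}^1_i = \bigl| \bigcup_{v\in S_2}\mathcal{N}_1(v) - V(G_i)\bigr|$, and notes that $\mathcal{E}_{G_i}$ is exactly this set, so $|\mathcal{E}_{G_i}| = \mathcal{I}^1_i$ and no genuine node is missing any 1-hop feature; it never inspects the aggregation coefficients. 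You instead prove row-wise equality of $SX$ at the genuine nodes, and in doing so you surface the one point the paper's proof silently skips: with symmetric GCN normalization the weight $S_{vu}$ depends on the degree of the neighbour $u$, and an extra node's degree inside $G_i^{+}$ can be strictly smaller than its degree $d_u$ in $G$, so literal output equality requires either recording original degrees at construction time or restricting to aggregators whose neighbour coefficients do not depend on the neighbour's degree --- exactly the patch you propose. With that caveat your argument gives exact equality of the 1-layer embeddings (a strictly stronger conclusion than the paper's counting of available nodes), and your closing observation about why the argument breaks at two layers mirrors the paper's follow-up discussion of $\mathcal{I}^2_i$ and its motivation for Cluster Nodes.
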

Using the \textbf{Cluster Nodes} approach to append additional nodes to subgraphs is better for two reasons:
\begin{itemize}[leftmargin=*]
%$\bullet$ 
\item From Equation \ref{eq:egi} and \ref{eq:cgi}, we can say that $\sum_i|\mathcal{E}_{G_i}| \geq \sum_i|\mathcal{C}_{G_i}|$.
%The number of nodes added to a subgraph via Cluster Nodes is less than or equal to the number of nodes added via the Extra Nodes approach.
This is because, using Extra Nodes, we add all the neighboring nodes. However, using Cluster Nodes, we only add one representative node for each neighboring cluster. Hence, the time taken to train and infer these modified subgraphs with Cluster Nodes is at most that with Extra Nodes.

%$\bullet$ 
\item Lemma \ref{lemma:extra_node_good} shows that the Extra Nodes approach reduces the information loss due to partitioning for $1$ Layer of GNN. However, with multiple layers of GNN, longer dependencies are not captured. In contrast, the Cluster Nodes approach overcomes this by computing a single Cluster Node's representation as a combination of features of all the nodes present in the corresponding cluster. This results in sharing further node information in $1$-hop. Additionally, the transfer of information from one cluster to another is captured.
\end{itemize}
\begin{algorithm}[ht]
    \centering
    \caption{Training GNN (Train on $\mathcal{G}_s$)}\label{alg:Gs_train_node}
    \footnotesize
    \begin{algorithmic}[1]
    \Require 
    $G=(V,E,X)$; Labels $Y$;  Node-Model $M$; Loss $\ell$; Number of Layers $L$
    %\Ensure 
    %Output trained weight matrix $W^*$
    \State Apply a coarsening algorithm on $G$, and output a normalized partition matrix $P$. 
    \State Construct $\mathcal{G}_s = \{G_1(V_1, E_1, X_1), G_2(V_2, E_2, X_2), \dots, G_k(V_k, E_k, X_k)\}$ using $P$;
        \If{Append additional nodes}
            \State Append $\mathcal{E}_{G_i}$ or $\mathcal{C}_{G_i}$ to each $G_i$ \Comment{Based on method of appending nodes}
        \EndIf
        \State Construct mask for each subgraph $Mask = \{mask_1, mask_2, \dots, mask_k\}$;
        %\State Construct feature matrix $\{X_1, X_2, \dots, X_k\}$ for $\mathcal{G}_s$;
        \State Construct label matrix $\{Y_1, Y_2, \dots, Y_k\}$ for $\mathcal{G}_s$ ;
        \State $O\leftarrow []$; $Y_{\textup{Collected}}\leftarrow []$;
    \For{each subgraph $G_i$ in $\mathcal{G}_s$}
            \State $O_i = M(L, A_i, D_i, X_i)$; \Comment{Algorithm \ref{alg:node_model}}
            \State $O.append(O_i[mask_i])$;
            \State $Y_{\textup{Collected}}.append(Y_i[mask_i])$;
        \EndFor
        \State $Loss = \ell(O, Y_{\textup{Collected}})$;
        \State Train $M$ to minimize $Loss$;
\end{algorithmic}
\end{algorithm}

\subsection{Node-level Task}
We use Algorithm \ref{alg:node_model} with $L$ number of layers to create a standard GNN model $M$ for node-level tasks. While we use Algorithm \ref{alg:Gc_train_node} to train a model on the coarsened graph $G'$, we propose Algorithm \ref{alg:Gs_train_node} to train a model on the set of subgraphs $\mathcal{G}_s$. Algorithm \ref{alg:Gc_train_node} utilizes the partition matrix $P$ generated by the coarsening algorithm to construct and train on the coarsened graph $G'$ and the coarsened label information $Y'$. On the other hand, our proposed Algorithm \ref{alg:Gs_train_node} uses the same partition matrix $P$ to construct and train on subgraphs $\mathcal{G}_s$ using original label information $Y$. For the node classification task, $\arg\max$ is used for creating $Y'$ for $G'$ and CrossEntropy as a loss function. For the node regression task, we do not create $G'$. Mean Absolute Error (MAE) is used as the loss function.

\subsection{Graph-level Task}
For a given graph $G\in\mathcal{D}$, we create $G'$ and $\mathcal{G}_s=\{G_1, G_2,$ $\dots, G_k\}$. Since $G'$ is a single graph and $\mathcal{G}_s$ is a set of subgraphs, we use Algorithm \ref{alg:graph_model_Gc} to create a model that trains on $G'$ and Algorithm \ref{alg:graph_model_Gs} to create a model that trains on $\mathcal{G}_s$. We use the CrossEntropy loss function for the graph classification task and the Mean Absolute Error (MAE) for the graph regression task. 

\begin{algorithm}[ht]
    \centering
    \caption{Graph-Model-$\mathcal{G}_s(\mathcal{G}_s, L)$}\label{alg:graph_model_Gs}
    \footnotesize
\begin{algorithmic}[1]
    \Require 
    $\mathcal{G}_s=\{G_1,G_2,\dots,G_k\}$; Number of Layers $L$;
    \State $\Tilde{X}\leftarrow []$
        \For{each subgraph $G_i\in \mathcal{G}_s$}
            \For {$j=1$ to $L$}
                \State $X_{i}^{(j)} = \sigma(\Tilde{D_i}^{-\frac{1}{2}}\Tilde{A_i}\Tilde{D_i}^{-\frac{1}{2}}X_{i}^{(j-1)}\mathcal{W}^{(j-1)})$ \Comment{Equation \ref{eq:gnn_propagation}}
            \EndFor
            \State $\Tilde{X}.stack(X_{i}^{(L)})$ \Comment{Stack along row axis}
        \EndFor
        \State $\bar{X} = \textup{MaxPooling}(\Tilde{X})$
        \State $Z = \bar{X}\mathcal{W}^{(L)}$ \\
    \Return $Z$
\end{algorithmic}
\end{algorithm}

\subsection{Time and Space Complexity} \label{subsec:time_space}
\begin{table}[ht]
\caption{Training and inference time and space complexity of FIT-GNN compared to classical and SGGC approaches.}
\centering
\setlength{\tabcolsep}{5pt}
\renewcommand{\arraystretch}{1.2}

\begin{subtable}[t]{0.47\linewidth}
\centering
\caption*{\textbf{(a) Time Complexity}}
\resizebox{\linewidth}{!}{%
\begin{tabular}{lcc}
\toprule
                              & \textbf{Train}       & \textbf{Inference}       \\ \midrule
\textbf{Classical}            & $nd^2 + n^2d$       & $nd^2 + n^2d$     \\ 
\textbf{SGGC}          & $kd^2 + k^2d$       & $nd^2 + n^2d$     \\ 
\midrule
\textbf{FIT-GNN} & $\begin{matrix}
kd^2 + k^2d\\+\sum_{i=1}^{k}[\bar{n}_i^2d
+\bar{n}_id^2]
\end{matrix}$     & $\sum_{i=1}^{k}[\bar{n}_i^2d+\bar{n}_id^2]$  \\
\bottomrule
\end{tabular}
}
\end{subtable}
\hfill
\begin{subtable}[t]{0.5075\linewidth}
\centering
\caption*{\textbf{(b) Space Complexity}}
\resizebox{\linewidth}{!}{%
\begin{tabular}{lcc}
\toprule
                              & \textbf{Train}       & \textbf{Inference}        \\ \midrule
\textbf{Classical}            & $n^2 + nd + d^2$     & $n^2+ nd + d^2$       \\ 
\textbf{SGGC}           & $k^2 + kd + d^2$     & $n^2 + nd + d^2$       \\ 
\midrule
\textbf{FIT-GNN} &$\begin{matrix}
k^2 + kd + d^2\\+ \max_{i=1} [\bar{n}_i^2 +\bar{n}_id] 
\end{matrix}$    & $
d^2 + \max_{i=1}[\bar{n}_i^2 + \bar{n}_id]
$ \\
\bottomrule
\end{tabular}
}
\end{subtable}
\label{tab:complexity_analysis}
\end{table}
In Equation \ref{eq:gnn_propagation}, the matrix dimensions are as follows: $\Tilde{A}$ has dimensions $(n \times n)$, $X^{(l)}$ is of dimensions $(n \times d)$, and $\mathcal{W}^{(l)}$ has dimensions $(d \times d)$. Time complexity for one layer of GNN computation is $\order{n^2d + nd^2}$. If we take $L$ layers, then the total time is $\order{Ln^2d + Lnd^2}$. The space complexity is $\order{n^2 + Lnd + Ld^2}$. When we compute on a sparse graph, the time complexity is $\order{m + Lnd^2}$ and space complexity is $\order{m + Lnd + Ld^2}$. The complexity is based on GCN. For other architectures like GAT \citep{veličković2018graphattentionnetworks}, the complexity changes.

SGGC improved the time and space complexity (Table \ref{tab:complexity_analysis}) for training the network by reducing the number of nodes from $n$ to $k$. However, the inference time and space complexity remain the same.

Let us compare three different models. One is the classical model, where no coarsening or partitioning is done; the second is SGGC, where training is performed on a smaller graph $G'$ and inference on $G$; the third is our approach, FIT-GNN, where both training and inference are performed on $\mathcal{G}_s$.

The inference time complexity of our model is $\order{\sum_{i=1}^{k}[\bar{n}_i^2d + \bar{n}_i d^2]}$, where $\bar{n}_i = n_i +  \phi_i$ and $\phi_i$ is the number of additional nodes appended in each subgraph. Table \ref{tab:complexity_analysis} compares different approaches.
\begin{lemma}
\label{lemma:time_space_conditions}
    The inference time complexity $\order{\sum_{i=1}^{k}[(n_i +  \phi_i)^2d+(n_i +  \phi_i)d^2]}$ is at most  $\order{n^2d + nd^2}$ if $\mathbb{E}[n_i +\phi_i] \leq \sqrt{\frac{d^2}{4} + \frac{d}{r} + \frac{n}{r} - \textup{Var}(n_i + \phi_i)} - \frac{d}{2}$, where $\mathbb{E}[n_i +\phi_i]$ is the expected number of nodes in each modified subgraph and $ \textup{Var}(n_i + \phi_i)$ is the variance of number nodes in each modified subgraph.
\end{lemma}
\begin{proof}
We can expand our inference time complexity as follows:
    \begin{gather*}
            \sum_{i=1}^{k}[(n_i +  \phi_i)^2d+(n_i +  \phi_i)d^2]
            = nd^2 + d\left [  \sum^k_{i=1}n_i^2 + \sum^k_{i=1}\phi_i^2 + 2 \sum_{i=1}^k n_i\phi_i + d\sum_{i=1}^k\phi_i  \right ]\\
    \end{gather*}
    Now, define expected number of nodes in each subgraph $\mathbb{E}[n_i]:= \frac{1}{k}\sum_{i=1}^{k} n_i = \frac{n}{k} = \frac{1}{r}$ (since  $k = n r$) and expected number of additional nodes appended in each subgraph $\mathbb{E}[\phi_i] := \frac{1}{k}\sum_{i=1}^{k} \phi_i$. We also define the variances and covariance,
    \begin{equation*}
        \text{Var}(n_i) := \frac{1}{k}\sum_{i=1}^{k} n_i^2 - \mathbb{E}[n_i]^2, \quad \text{Var}(\phi_i) := \frac{1}{k}\sum_{i=1}^{k} \phi_i^2 - \mathbb{E}[\phi_i]^2, \quad \text{Cov}(n_i, \phi_i) = \text{Cov} := \frac{1}{k}\sum_{i=1}^{k} \phi_i n_i - \mathbb{E}[\phi_i]\mathbb{E}[n_i]
    \end{equation*}
    We can also write,
    \[
        \text{Var}(n_i + \phi_i) = \frac{1}{k}\sum_{i=1}^{k}(n_i + \phi_i)^2 - (\mathbb{E}[n_i] + \mathbb{E}[\phi_i])^2 = \text{Var}(n_i) + \text{Var}(\phi_i) + 2\text{Cov}(n_i, \phi_i)
    \]
    Using the definitions above, we can say:
    \begin{gather*}
            \sum_{i=1}^{k}[(n_i +  \phi_i)^2d+(n_i +  \phi_i)d^2]
            = nd^2 + d\left [  \sum^k_{i=1}n_i^2 + \sum^k_{i=1}\phi_i^2 + 2 \sum_{i=1}^k n_i\phi_i + d\sum_{i=1}^k\phi_i  \right ]\\
            = nd^2 + d\left [  k(\textup{Var}(n_i) +\mathbb{E}[n_i]^2) + k(\textup{Var}(\phi_i) +\mathbb{E}[\phi_i]^2) + 2k(\textup{Cov}(n_i, \phi_i) +\mathbb{E}[n_i]\mathbb{E}[\phi_i]) + dk\mathbb{E}[\phi_i]  \right ]\\
            = nd^2 + nd\left [  r\textup{Var}(n_i) +\frac{1}{r} + r\textup{Var}(\phi_i) + r\mathbb{E}[\phi_i]^2 + 2r\textup{Cov}(n_i, \phi_i) +2\mathbb{E}[\phi_i] + rd\mathbb{E}[\phi_i]  \right ]\\
            =nd^2 + nd\left [  r\mathbb{E}[\phi_i]^2 + (2+rd)\mathbb{E}[\phi_i] + r\textup{Var}(n_i+\phi_i) + \frac{1}{r}  \right ]
    \end{gather*}
    Given, $\mathbb{E}[n_i +\phi_i] \leq \sqrt{\frac{d^2}{4} + \frac{d}{r} + \frac{n}{r} - \textup{Var}(n_i + \phi_i)} - \frac{d}{2} \Rightarrow \mathbb{E}[\phi_i] \leq \sqrt{\frac{d^2}{4} + \frac{d}{r} + \frac{n}{r} - \textup{Var}(n_i + \phi_i)} - (\frac{d}{2}+\frac{1}{r})$. Using this, we can say the following:
\begin{align*}
\sum_{i=1}^{k}[(n_i +&  \phi_i)^2d+(n_i +  \phi_i)d^2]
= nd^2 + nd\left[
    r\mathbb{E}[\phi_i]^2 + (2+rd)\mathbb{E}[\phi_i]
    + r\textup{Var}(n_i+\phi_i) + \frac{1}{r}
\right] \\
&\le nd^2 + nd\left[
    r\left(
        \frac{d^2}{4} + \frac{d}{r} +\frac{n}{r}- \textup{Var}(n_i+\phi_i)
        + \frac{d^2}{4} + \frac{1}{r^2} + \frac{d}{r}
        -2\left(\frac{d}{2}+\frac{1}{r}\right)\sqrt{\Delta}
    \right)
\right. \\
&\qquad\qquad\left.
    +2\sqrt{\Delta} -2\left(\frac{d}{2}+\frac{1}{r}\right)
    + rd\sqrt{\Delta} - rd\left(\frac{d}{2}+\frac{1}{r}\right)
    + r\textup{Var}(n_i+\phi_i) + \frac{1}{r}
\right] \\
&\qquad\qquad\qquad (\textup{where }\frac{d^2}{4} + \frac{d}{r} + \frac{n}{r} - \textup{Var}(n_i + \phi_i) = \Delta) \\
& = nd^2 +n^2d
\end{align*} 
Hence, $\sum_{i=1}^{k}[(n_i +  \phi_{i})^2d+(n_i +  \phi_{i})d^2] \leq n^2d + nd^2$.
\end{proof}
\begin{corollary}
\label{cor:var}
    $\mathbb{E}[\phi_i]$ has a positive upper bound when $\textup{Var}(n_i+\phi_i) \leq \frac{n}{r} - \frac{1}{r^2}$.
\end{corollary}
\begin{proof}
    From Lemma \ref{lemma:time_space_conditions}, we know the upper bound of $\mathbb{E}[\phi_i]$ is $\sqrt{\frac{d^2}{4} + \frac{d}{r} + \frac{n}{r} - \textup{Var}(n_i + \phi_i)} - \left (\frac{d}{2}+\frac{1}{r} \right )$. 
    
    If $\sqrt{\frac{d^2}{4} + \frac{d}{r} + \frac{n}{r} - \textup{Var}(n_i + \phi_i)} - \left (\frac{d}{2}+\frac{1}{r} \right ) \geq 0$, then we can say the following:
    \begin{gather*}
        \sqrt{\frac{d^2}{4} + \frac{d}{r} + \frac{n}{r} - \textup{Var}(n_i + \phi_i)} - \left (\frac{d}{2}+\frac{1}{r} \right ) \geq 0 
        \Rightarrow\frac{d^2}{4} + \frac{d}{r} + \frac{n}{r} - \textup{Var}(n_i + \phi_i) \geq \left (\frac{d}{2}+\frac{1}{r} \right )^2 \\
        \Rightarrow \textup{Var}(n_i+\phi_i) \leq \frac{n}{r} - \frac{1}{r^2} \\
    \end{gather*}
\end{proof}
For a given graph, when the above conditions are satisfied, our approach has better inference time and space complexity than other approaches. From Corollary \ref{cor:var}, we can understand that it is ideal to have similarly sized subgraphs. For more details on time and space complexity, refer to Section \ref{sec:more_time_space} in the Appendix.

\section{Experiments} \label{sec:experiments}
Let training on the coarsened graph $G'$ be referred to as \textbf{Gc-train}, subgraph-level training as \textbf{Gs-train}, and subgraph-level inference process as \textbf{Gs-infer}. Once we have constructed $G'$ and $\mathcal{G}_s$, we train FIT-GNN in 4 different setups:
\begin{itemize}[leftmargin=*]
    \item \textbf{Gc-train-to-Gs-train}: Train the GNN model on $G'$ as per Algorithm \ref{alg:Gc_train_node}, then use the learned weight as an initialization for subgraph-level training and final inference on $\mathcal{G}_s$. We examine whether pretraining on $G'$ (which requires less time) followed by fine-tuning on $\mathcal{G}_s$ for fewer epochs can improve performance.
    \item \textbf{Gc-train-to-Gs-infer}: Train the GNN model on $G'$ as per Algorithm \ref{alg:Gc_train_node}, then infer on $\mathcal{G}_s$ using the learned weights. We explore whether training solely on the $G'$ is sufficient for inference on $\mathcal{G}_s$. This setting presents a computationally efficient option, allowing users with limited resources to obtain results close to the best reported.
    \item \textbf{Gs-train-to-Gs-infer}: Perform subgraph-level training and inference on $\mathcal{G}_s$.
    \item \textbf{Gc-train-to-Gc-infer}: Unlike node-level tasks, for graph-level tasks, inference can be done directly on $G'$ because the label corresponds to the entire graph and not to individual nodes. Therefore, even with a reduced graph representation, meaningful inference is possible. Therefore, in this setup for graph-level tasks only, we train and infer on $G'$.
\end{itemize}
All four setups can be applied to graph-level tasks. For the node classification task, we apply all setups except \textbf{Gc-train-to-Gc-infer}. For the node regression task, we only perform \textbf{Gs-train-to-Gs-infer} because a coarsened graph is not created for node regression datasets.

Dataset descriptions and details related to hyperparameters, device configuration, and key packages used for the experiments are in Section \ref{sec:dataset_description}
and \ref{sec:hyperparameter}. Source code: \href{https://github.com/Roy-Shubhajit/FIT-GNN}{\faGithub\ https://github.com/Roy-Shubhajit/FIT-GNN}.

\begin{table}[ht]
\caption{Datasets and corresponding splits used in our experiments. `c' denotes the number of classes in node classification datasets. Public split ``fixed'' by \cite{10.5555/3045390.3045396} is used for \textit{Cora}, \textit{Citeseer}, and \textit{PubMed}. For the ``random'' split, we use 20 nodes per class for training, 30 per class for validation, and the rest for testing. For \textit{QM9}, we follow \cite{gilmer2017neural} and predict one property from each broad category: dipole moment ($\mu$), HOMO-LUMO gap ($\epsilon_{\mathrm{HOMO}}-\epsilon_{\mathrm{LUMO}}$), zero-point vibration energy (ZPVE), and atomization energy ($U^{\mathrm{ATOM}}$) at $298.15\mathrm{K}$.}
\centering
\resizebox{0.85\textwidth}{!}{%
\begin{tabular}{@{}llrrr@{}}
\toprule
\textbf{Task} & \textbf{Dataset (Reference)} & \textbf{Train} & \textbf{Val} & \textbf{Test} \\ 
\midrule

\multirow{3}{*}{\textbf{Node Regression}} 
& \textit{Chameleon} \citep{rozemberczki2019multiscale} & 30\% & 20\% & 50\% \\
& \textit{Crocodile} \citep{rozemberczki2019multiscale} & 30\% & 20\% & 50\% \\
& \textit{Squirrel} \citep{rozemberczki2019multiscale} & 30\% & 20\% & 50\% \\

\midrule

\multirow{6}{*}{\textbf{Node Classification}} 
& \textit{Cora} \citep{McCallum2000} & $20 \times c$ & $30 \times c$ & Rest \\
& \textit{Citeseer} \citep{10.1145/276675.276685} & $20 \times c$ & $30 \times c$ & Rest \\
& \textit{Pubmed} \citep{Sen_Namata_Bilgic_Getoor_Galligher_Eliassi-Rad_2008} & $20 \times c$ & $30 \times c$ & Rest \\
& \textit{Coauthor Physics} \citep{shchur2018pitfalls} & $20 \times c$ & $30 \times c$ & Rest \\
& \textit{DBLP} \citep{10.1145/1401890.1402008} & $20 \times c$ & $30 \times c$ & Rest \\
& \textit{OGBN-Products} \citep{10.5555/3495724.3497579} & $20 \times c$ & $30 \times c$ & Rest \\

\midrule

\multirow{2}{*}{\textbf{Graph Regression}} 
& \textit{QM9} \citep{wu2018moleculenet} & 50\% & 25\% & 25\% \\
& \textit{ZINC (Subset)} \citep{gomez2018automatic} & 50\% & 25\% & 25\% \\

\midrule

\multirow{2}{*}{\textbf{Graph Classification}} 
& \textit{PROTEINS} \citep{Morris+2020} & 50\% & 25\% & 25\% \\
& \textit{AIDS} \citep{Morris+2020} & 50\% & 25\% & 25\% \\

\bottomrule
\end{tabular}%
}
\label{tab:dataset_split}
\end{table}
\section{Result and Analysis}
\subsection{Model Performance}
%%%%%%%%%%%%%%%%%%% OGBN PRODUCTS RESULTS %%%%%%%%%%%%%%%%%%%
\begin{wraptable}{r}{2.275in}
\vspace{-0.5cm}
\caption{Results on OGBN-Products. FIT-GNN model uses a coarsening ratio $0.5$. \textbf{variation\_neighborhoods} coarsening algorithm is used. OOM means Out Of Memory.}
    \centering
    \scalebox{0.9}{
\begin{tabular}{@{}lr@{}}
\toprule
\textbf{Full} \citep{luo2024classic}    & $0.823\pm 0.001$          \\
\textbf{SGGC}    & OOM                       \\
\textbf{GCOND}   & OOM                       \\
\textbf{BONSAI}  & OOM                       \\
\textbf{FIT-GNN} &  \colorbox{green}{$\mathbf{0.894\pm 0.000}$} \\ \bottomrule
\end{tabular}}
\label{tab:node_cls_products}
\end{wraptable}
%%%%%%%%%%%%%%%% NODE CLASSIFICATION RESULTS %%%%%%%%%%%%%%%%
\begin{table}[ht]
\centering
\caption{Results for node classification tasks with accuracy as the metric (higher is better). We use the Cluster Nodes method to append additional nodes to subgraphs and \textbf{Gs-train-to-Gs-infer} as experimental setup. \textbf{variation\_neighborhoods} coarsening algorithm is used.}
\resizebox{0.9\textwidth}{!}{%
\begin{tabular}{@{}llrrrrrr@{}}
\toprule
\multirow{2}{*}{\textbf{Methods}} &
  \multirow{2}{*}{\textbf{Models}} &
  \multirow{2}{*}{\textbf{\begin{tabular}[c]{@{}r@{}}Reduction\\ Ratio (r)\end{tabular}}} &
  \multicolumn{5}{c}{\textbf{Dataset}} \\ \cmidrule(l){4-8} 
                                &                      &     & \textit{Cora}    & \textit{Citeseer} & \textit{Pubmed}   & \textit{DBLP}    & \textit{Physics} \\ \midrule
\multirow{2}{*}{\textbf{Full}} &
  GCN &
  1.0 &
  $0.821\pm 0.002$ &
  $0.706\pm 0.002$ &
  $0.768\pm 0.020$ &
  $0.724\pm 0.002$ &
  $\mathbf{0.933\pm 0.000}$ \\ \cmidrule(l){2-8} & GAT                  & 1.0 & $0.809\pm 0.004$ & $0.700\pm 0.003$ & $0.740\pm 0.004$ & $0.713\pm 0.004$ & $0.909\pm 0.008$ \\ \midrule
\multirow{4}{*}{\textbf{SGGC}}  & \multirow{2}{*}{GCN}         & 0.3 & $0.808\pm 0.003$ & $0.700\pm 0.001$ & $0.773\pm 0.002$ & $0.737\pm 0.011$ & $0.928\pm 0.003$ \\
                                &                      & 0.5 &  $0.808\pm 0.001$ & $0.716\pm 0.002$ &  \colorbox{green}{$\mathbf{0.793\pm 0.002}$}& $0.733\pm 0.008$ &  {$0.931\pm 0.004$}\\
                                \cmidrule(l){2-8} 
                                & \multirow{2}{*}{GAT} 
                                                      & 0.3 & $0.686\pm 0.010$ & $0.635\pm 0.021$ & $0.773\pm 0.002$ & $0.626\pm 0.028$ & $0.870\pm 0.016$ \\
                                &                      & 0.5 & $0.641\pm 0.026$ & $0.635\pm 0.024$ & $0.683\pm 0.024$ & $0.604\pm 0.047$ & $0.842\pm 0.018$ \\
                               \midrule
\multirow{4}{*}{\textbf{GCOND}} & \multirow{2}{*}{GCN}    & 0.3 & $0.806\pm 0.003$ &  {$0.722\pm 0.001$}&  {$0.779\pm 0.002$}& $0.748\pm 0.001$ & $0.876\pm 0.011$ \\
                                &                      & 0.5 &  {$0.808\pm 0.004$} &  \colorbox{green}{$\mathbf{0.730\pm 0.007}$}& $0.773\pm 0.004$ & $0.755\pm 0.002$ & $0.804\pm 0.005$ \\
                                \cmidrule(l){2-8} 
                                & \multirow{2}{*}{GAT} & 0.3 & $0.549\pm 0.080$ & $0.628\pm 0.095$ & $0.370\pm 0.079$ & $0.481\pm 0.034$ & OOM \\
                                &                      & 0.5 & $0.712\pm 0.059$ & $0.501\pm 0.085$ & $0.409\pm 0.044$ & $0.464\pm 0.030$ & OOM \\
                                \midrule
\multirow{4}{*}{\textbf{BONSAI}} &
  \multirow{2}{*}{GCN} & 0.3 & $0.722\pm 0.011$ & $0.585\pm 0.003$  & $0.538\pm 0.103$  & $0.683\pm 0.039$ & $0.798\pm 0.007$ \\
                                &                      & 0.5 & $0.701\pm 0.012$ & $0.666\pm 0.001$  & $0.670\pm 0.084$  & $0.702\pm 0.049$ & $0.832\pm 0.016$ \\
                                \cmidrule(l){2-8} 
                                & \multirow{2}{*}{GAT} & 0.3 & $0.680\pm 0.014$ & $0.566\pm 0.003$  & $0.540\pm 0.096$  & $0.665\pm 0.027$ & $0.791\pm 0.010$ \\
                                &                      & 0.5 & $0.683\pm 0.006$ & $0.579\pm 0.004$  & $0.523\pm 0.133$  & $0.681\pm 0.025$ & $0.812\pm 0.016$ \\
                                \midrule
\multirow{4}{*}{\textbf{FIT-GNN}} &
  \multirow{2}{*}{GCN} & 0.3 & $0.800\pm 0.003$ & $0.679\pm 0.003$  & $0.754\pm 0.001$  &  \colorbox{green}{$\mathbf{0.786\pm 0.001}$}&  \colorbox{green}{$0.932\pm 0.000$}\\
                                &                      & 0.5 &  \colorbox{green}{$\mathbf{0.829\pm 0.002}$} & $0.668\pm 0.003$  & $0.761\pm 0.004$  & $0.700\pm 0.001$ & $0.926\pm 0.000$ \\
                                \cmidrule(l){2-8} 
                                & \multirow{2}{*}{GAT} & 0.3 & $0.761\pm 0.004$ & $0.655\pm 0.005$  & $0.754\pm 0.002$  &  {$0.771\pm 0.002$}& $0.885\pm 0.004$ \\
                                &                      & 0.5 & $0.792\pm 0.003$ & $0.669\pm 0.004$  & $0.760\pm 0.003$  & $0.720\pm 0.002$ & $0.885\pm 0.003$ \\
                                \bottomrule
\end{tabular}%
}
\label{tab:node_cls_result}
\end{table}
\begin{figure}[ht]
    \centering
    \includegraphics[width=0.9\linewidth]{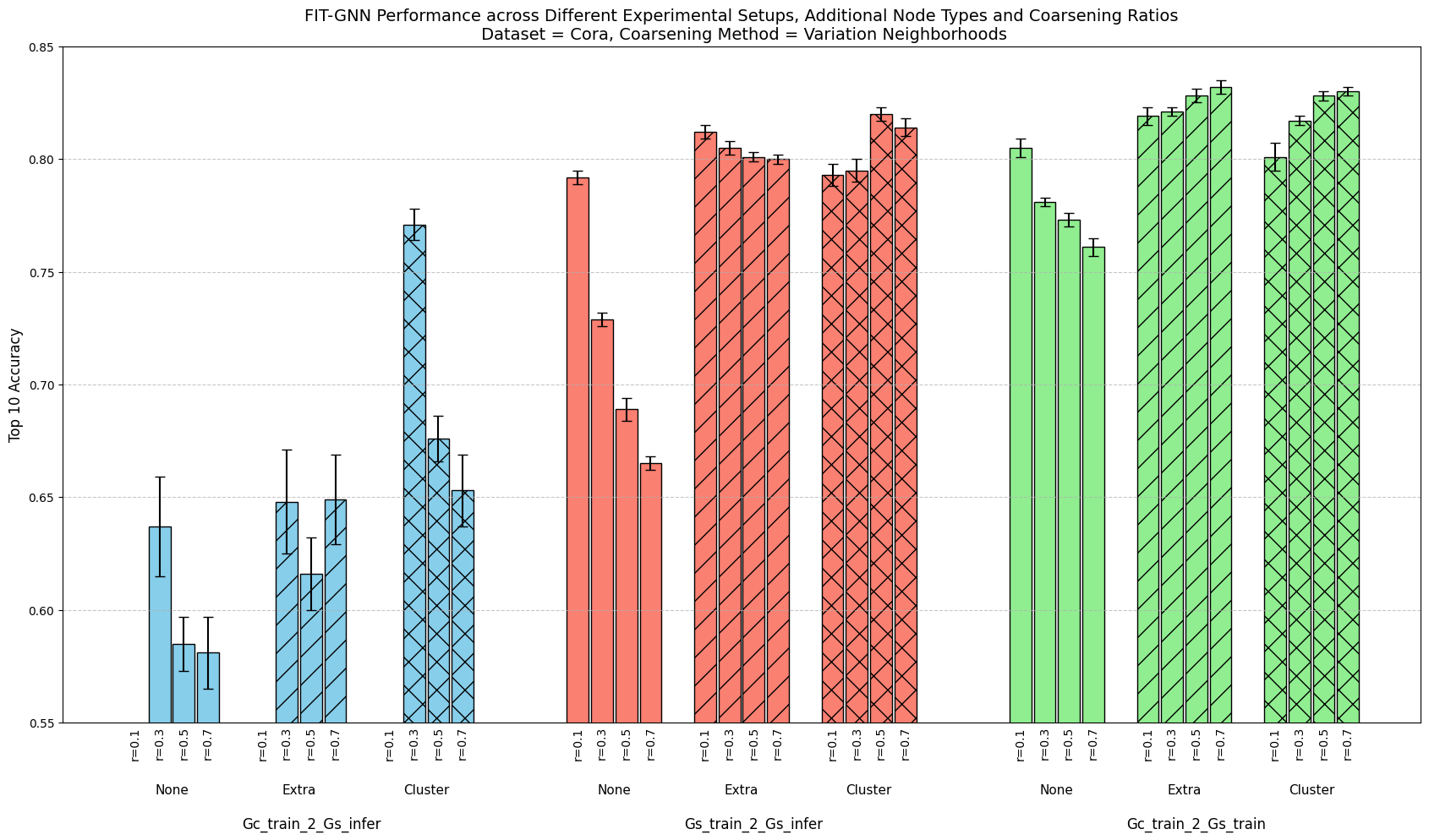}
    \caption{The plot shows the ablation study conducted on the \emph{Cora} dataset to determine which experimental setup performs better than the others. The plot also compares different methods of appending nodes to the subgraphs and how performance changes with varying coarsening ratios. \textbf{variation\_neighborhoods} coarsening algorithm is used.}
    \label{fig:cora_ablation_exp}
\end{figure}
%%%%%%%%%%%%%%%%%% NODE REGRESSION RESULTS %%%%%%%%%%%%%%%%%%
\begin{table}[ht]
\centering
\caption{This table shows the normalized MAE loss (lower is better) for node regression task on different datasets with \textbf{Gs-train-to-Gs-infer} experiment setup and \textbf{Cluster Nodes} method. \textbf{variation\_neighborhoods} coarsening algorithm is used.}
\resizebox{0.80\textwidth}{!}{%
\begin{tabular}{llrrrr}
\toprule
\multirow{2}{*}{\textbf{Methods}} &
  \multirow{2}{*}{\textbf{Models}} &
  \multirow{2}{*}{\textbf{Reduction Ratio (r)}} &
  \multicolumn{3}{c}{\textbf{Dataset}} \\\cmidrule(l){4-6}
 &     &     & \textit{Chameleon} & \textit{Crocodile} & \textit{Squirrel} \\ \midrule
\multirow{4}{*}{\textbf{Full}} &
  GCN &
  1.0 &
  $0.852\pm 0.001$ &
  $0.854\pm 0.000$ &
  $0.802\pm 0.000$ \\\cmidrule(l){2-6}
 & GAT & 1.0 & $0.846\pm 0.002$   & $0.850\pm 0.000$   & $0.805\pm 0.001$ \\
 \cmidrule(l){2-6}
 & SAGE & 1.0 & $0.848\pm 0.002$   & $0.850\pm 0.000$   & $0.804\pm 0.000$ \\
 \cmidrule(l){2-6}
 & GIN & 1.0 & $0.843\pm 0.000$   & $0.852\pm 0.000$   & $0.796\pm 0.000$ \\\midrule
\multirow{16}{*}{\textbf{FIT-GNN}} &
  \multirow{4}{*}{GCN} &
  0.1 &
  $0.496\pm 0.005$ &
  $0.364\pm 0.001$ &
  $0.663\pm 0.003$ \\
 &     & 0.3 & $0.531\pm 0.006$   & $0.369\pm 0.002$   & $0.682\pm 0.002$ \\
 &     & 0.5 & $0.531\pm 0.004$   & $0.371\pm 0.001$   & $0.722\pm 0.002$ \\
 &     & 0.7 & $0.536\pm 0.009$   & $0.368\pm 0.003$   & $0.696\pm 0.002$ \\ \cmidrule(l){2-6}
 &
  \multirow{4}{*}{GAT} &
  0.1 &
  $0.582\pm 0.029$ &
  $0.395\pm 0.022$ &
  $0.734\pm 0.019$ \\
 &     & 0.3 & $0.537\pm 0.021$   & $0.383\pm 0.020$   & $0.724\pm 0.020$ \\
 &     & 0.5 & $0.556\pm 0.018$   & $0.386\pm 0.018$   & $0.728\pm 0.023$ \\
 &     & 0.7 & $0.576\pm 0.018$   & $0.395\pm 0.012$   & $0.746\pm 0.020$ \\
 \cmidrule(l){2-6}
 &
  \multirow{4}{*}{SAGE} &
  0.1 &
  $0.506\pm 0.006$ &
  \colorbox{green}{$\mathbf{0.309\pm 0.014}$} &
  \colorbox{green}{$\mathbf{0.589\pm 0.003}$} \\
 &     & 0.3 & $0.489\pm 0.003$   & $0.315\pm 0.001$   & $0.602\pm 0.003$ \\
 &     & 0.5 & \colorbox{green}{$\mathbf{0.484\pm 0.004}$}   & $0.317\pm 0.001$   & $0.618\pm 0.002$ \\
 &     & 0.7 & $0.539\pm 0.009$   & $0.323\pm 0.000$   & $0.621\pm 0.004$ \\
 \cmidrule(l){2-6}
 &
  \multirow{4}{*}{GIN} &
  0.1 &
  $0.769\pm 0.101$ &
  $0.722\pm 0.098$ &
  $0.795\pm 0.018$ \\
 &     & 0.3 & $0.808\pm 0.056$   & $0.697\pm 0.122$   & $0.796\pm 0.004$ \\
 &     & 0.5 & $0.810\pm 0.062$   & $0.730\pm 0.091$   & $0.786\pm 0.032$ \\
 &     & 0.7 & $0.792\pm 0.089$   & $0.779\pm 0.102$   & $0.791\pm 0.009$ \\
 \bottomrule
\end{tabular}%
}
\label{tab:node_reg_results}
\end{table}
\textbf{Node Classification: }Table \ref{tab:node_cls_result} shows the results for node classification accuracy (mean and standard deviation of the top $10$ accuracies) with coarsening ratios $0.3$ and $0.5$ using the \textbf{Cluster Nodes} approach. Comparison with other coarsening ratios is in Table \ref{tab:node_cls_result_appendix}. Results show that the accuracy is comparable to the baselines. Figure \ref{fig:cora_ablation_exp} shows the comparison of performance between different experimental setups for different methods of appending nodes on the \textit{Cora} dataset. From the Figure, we observe that \textbf{Gc-train-to-Gs-train} offers the best result among the three experimental setups for the node classification task. For the `None' method of appending nodes, where no additional nodes are added to each subgraph, we notice that it performs poorly compared to other methods, indicating that the Extra Node and Cluster Node methods indeed provide additional information for better empirical performance. We also observe in the `None' method that, with an increase in the coarsening ratio, the performance drops. This is because, as the coarsening ratio increases, the size of each subgraph gets smaller, hence more edges are cut due to partitioning, leading to more information loss. For the \textbf{Gc-train-to-Gs-infer} method, we observe that the performance drops as the coarsening ratio increases. This can be argued because of the change in the distribution of the coarsened graph and the set of subgraphs. When training on $G_s$, the distribution in the train and test sets is similar, hence attaining better performance. While comparing the methods of appending nodes, we infer that the Cluster Node method is performing similarly compared to the Extra Nodes method, if not better. Table \ref{tab:node_cls_products} shows results on the \textit{OGBN-Products} dataset. We use the result by \cite{luo2024classic} for the full graph setting, which used the OGBN's standard split, while the split used in our work is detailed in Table \ref{tab:dataset_split}.  

\textbf{Node Regression: }
Table \ref{tab:node_reg_results} presents the node regression error (mean and standard deviation of the lowest 10 MAE). Interestingly, our results demonstrate that utilizing localized subgraphs for inference (FIT-GNN) significantly improves performance compared to full-graph inference. To investigate this counterintuitive performance leap, we conducted a rigorous ablation study (Appendix \ref{sec:appendix_ablation}). Our analysis indicates that this improvement is driven by the structural properties of the input data during inference rather than the training regime itself. First, partitioning the graph creates localized contexts that are statistically more homogeneous; we observed that the label standard deviation within individual subgraphs is drastically lower than the global variation, presenting a simpler optimization landscape for the model. Second, while coarsening (e.g., at a ratio of $r=0.5$) causes a vast majority of nodes to lose a significant portion of their distant 2nd-hop neighborhood, this structural loss actually acts as \textit{implicit adversarial pruning}. In these specific heterophilic graphs, long-range information often introduces noise or adversarial signals. By filtering out this noise through coarsening, the model is able to fully exploit the low-variance local structures, ultimately leading to the observed reduction in regression error.
\begin{table}[ht]
\centering
\caption{Results on Graph Regression with MAE as metric (lower is better); We use Extra node and \textbf{Gs-train-to-Gs-infer} as experimental setup. \textbf{variation\_neighborhoods} coarsening algorithm is used.}
\resizebox{0.9\textwidth}{!}{%
\begin{tabular}{@{}llrrrrrr@{}}
\toprule
\multirow{2}{*}{\textbf{Methods}} &
  \multirow{2}{*}{\textbf{Models}} &
  \multirow{2}{*}{\textbf{Reduction Ratio (r)}} &
  \multicolumn{5}{c}{\textbf{Dataset}} \\ \cmidrule(l){4-8} 
                                &                      &     & \textit{ZINC (Subset)}    & \textit{QM9 ($\mu$)} & \textit{QM9 ($\Delta\epsilon$)}   & \textit{QM9 ($\textup{ZPVE}$)}    & \textit{QM9 ($U^{\textup{ATOM}}$)} \\ \midrule

\multirow{4}{*}{\textbf{Full}} &
  GCN & 1.0 & $0.743$ & $0.855$ & $1.012$ & $1.102$ & $1.073$ \\
  & GAT & 1.0 & $0.736$ & $0.919$ & $1.015$ & $1.104$ & $1.066$ \\
  & SAGE & 1.0 & $0.685$ & $0.885$ & $1.015$ & $1.109$ & $1.076$ \\
  & GIN & 1.0 & $0.748$ & $0.925$ & $1.053$ & $1.126$ & $1.078$ \\
\midrule

\multirow{16}{*}{\textbf{FIT-GNN}} &
  \multirow{4}{*}{GCN} & 0.1 & $0.625$ & $0.713$ & $1.223$ & \colorbox{green}{$\mathbf{0.447}$} & $1.223$ \\
  &  & 0.3 & \colorbox{green}{$\textbf{0.573}$} & $0.708$ & $0.975$ & $0.473$ & $1.220$ \\
  &  & 0.5 & $0.645$ & $0.702$ & $1.153$ & $0.485$ & $1.179$ \\
  &  & 0.7 & $0.651$ & $0.715$ & $1.189$ & $0.456$ & $1.240$ \\
  \cmidrule(l){2-8}
  & \multirow{4}{*}{GAT} & 0.1 & $0.574$ & $0.705$ & $0.940$ & $0.464$ & $0.524$ \\
  &  & 0.3 & $0.652$ & $0.708$ & $1.056$ & $0.528$ & $0.564$ \\
  &  & 0.5 & $0.620$ & $0.693$ & $1.227$ & $0.475$ & $0.561$ \\
  &  & 0.7 & $0.658$ & \colorbox{green}{$\mathbf{0.686}$} & $1.030$ & $0.491$ & $0.703$ \\
  \cmidrule(l){2-8}
  & \multirow{4}{*}{SAGE} & 0.1 & $0.735$ & $0.726$ & \colorbox{green}{$\mathbf{0.739}$} & $0.526$ & $0.905$ \\
  &  & 0.3 & $0.671$ & $0.748$ & $0.711$ & $0.602$ & \colorbox{green}{$\mathbf{0.518}$} \\
  &  & 0.5 & $0.695$ & $0.739$ & $0.747$ & $0.606$ & $ 0.833$ \\
  &  & 0.7 & $0.645$ & $0.722$ & $0.677$ & $0.659$ & $1.005$ \\
  \cmidrule(l){2-8}
  & \multirow{3}{*}{GIN} & 0.1 & $0.644$ & $0.726$ & $0.746$ & $0.494$ & $0.856$ \\
  &  & 0.3 & $0.669$ & $0.723$ & $0.761$ & $0.509$ & $0.854$ \\
  &  & 0.5 & $0.631$ & $0.725$ & $0.848$ & $0.737$ & $ 1.058$ \\
  %&  & 0.7 & $0.658$ & - & - & - & - \\
\bottomrule
\end{tabular}%
}
\label{tab:graph_reg_results}
\end{table}

\textbf{Graph Regression: }Table \ref{tab:graph_reg_results} shows the results for the graph regression task on \textit{ZINC (Subset)} and \textit{QM9} dataset. All the results show that the FIT-GNN model's performance is better than the baselines. It is also observed that a lower coarsening ratio yields better loss, which implies that the model performs better on molecular graphs when the subgraph size is large. We observe that finer subgraphs lose out on global information about the molecule, which is necessary for the prediction. Similarly, the performance degrades in the \textbf{Gc-train-to-Gc-infer} setup, where we infer on the coarsened graphs. This is because, when a graph is reduced to a smaller graph, the individual node information is lost, which is crucial since, in graphs representing different molecules, components formed by different atoms result in different properties of the molecule.

\textbf{Graph Classification: }Table \ref{tab:graph_cls_result} shows graph classification results on \textit{AIDS} and \textit{PROTEINS} datasets, where training and inference are done on $G'$. For baselines like DOSCOND and KIDD, we use `Graph per class', whereas we use `Reduction Ratio (r)' as a metric to denote how these algorithms reduce the size of training data. DOSCOND and KIDD propose to generate synthetic graphs that mimic the training data distribution. Therefore, to cover all the graphs in the dataset appropriately, this approach generates \{1, 10, 50\} graphs per class. However, the graph size is kept similar to the graph size in the training set. These approaches also don't cover the test set. Compared to these baselines, our approach transforms each graph into a set of subgraphs and a coarsened graph based on a reduction ratio. As a result, each graph in the training set and test set gets reduced. From our results, we see that FIT-GNN outperforms all the baselines.

Table \ref{tab:coarsening_method_ablation_node} and \ref{tab:coarsening_method_ablation_graph} present the ablation study on various coarsening algorithms for all the aforementioned tasks. From the results, we observe that \textbf{variation\_neighborhoods} coarsening algorithm yields consistently better results compared to others.
%%%%%%%% Graph Regression Result %%%%%%%%%%

\begin{table}[ht]
\centering
\caption{Results on Graph Classification with accuracy as metric (higher is better). We use the extra nodes method and \textbf{Gc-train-to-Gc-infer} as experimental setup. \textbf{algebraic\_JC} coarsening algorithm is used.}
%\captionsetup{justification=centering}
\begin{subtable}[t]{0.5\textwidth}
\centering
\resizebox{0.9\textwidth}{!}{%
\begin{tabular}{@{}llcrr@{}}
	\toprule
\multirow{2}{*}{\textbf{Methods}} &
  \multirow{2}{*}{\textbf{Models}} &
  \multirow{2}{*}{\textbf{Graph per class}} &
  \multicolumn{2}{c}{\textbf{Dataset}} \\ \cmidrule(l){4-5}
                                &                      &   & \textit{AIDS}    & \textit{PROTEINS} \\ \midrule

\multirow{6}{*}{\textbf{DOSCOND}} &
  \multirow{3}{*}{GCN} & 1 & $0.785$ & $0.590$ \\
  &  & 10 & $0.666$ & $0.647$ \\
  &  & 50 & $0.518$ & $0.656$ \\
  \cmidrule(l){2-5}
  & \multirow{3}{*}{GAT} & 1 & $0.608$ & $0.652$ \\
  &  & 10 & $0.723$ & $0.667$ \\
  &  & 50 & $0.729$ & $0.657$ \\
\midrule

\multirow{12}{*}{\textbf{KIDD}} &
  \multirow{3}{*}{GCN} & 1 & $0.399$ & $0.416$ \\
  &  & 10 & $0.236$ & $0.659$ \\
  &  & 50 & $0.252$ & $0.671$ \\
  \cmidrule(l){2-5}
  & \multirow{3}{*}{GAT} & 1 & $0.441$ & $0.416$ \\
  &  & 10 & $0.225$ & $0.665$ \\
  &  & 50 & $0.218$ & $0.671$ \\
  \cmidrule(l){2-5}
  & \multirow{3}{*}{SAGE} & 1 & $0.653$ & $0.416$ \\
  &  & 10 & $0.361$ & $0.650$ \\
  &  & 50 & $0.283$ & $0.673$ \\
  \cmidrule(l){2-5}
  & \multirow{3}{*}{GIN} & 1 & $0.759$ & $0.412$ \\
  &  & 10 & $0.637$ & $0.655$ \\
  &  & 50 & $0.634$ & $0.674$ \\

\bottomrule
\end{tabular}%
}
%\caption{\fontsize{9}{9}\selectfont{DOSCOND, KIDD}}
\label{tab:graph_cls_result1}
\end{subtable}%
\hfill
\begin{subtable}[t]{0.5\textwidth}
\centering
\resizebox{0.75\textwidth}{!}{%
\begin{tabular}{@{}llcrr@{}}
	\toprule
\multirow{2}{*}{\textbf{Methods}} &
  \multirow{2}{*}{\textbf{Models}} &
  \multirow{2}{*}{\textbf{\begin{tabular}[c]{@{}r@{}}Reduction\\ Ratio (r)\end{tabular}}} &
  \multicolumn{2}{c}{\textbf{Dataset}} \\ \cmidrule(l){4-5}
                                &                      &   & \textit{AIDS}    & \textit{PROTEINS} \\ \midrule

\multirow{4}{*}{\textbf{Full}} &
  GCN & 1.0 & $0.788$ & $0.710$ \\
  & GAT & 1.0 & $0.802$ & $0.645$ \\
  & SAGE & 1.0 & $0.770$ & $0.613$ \\
  & GIN & 1.0 & $0.800$ & $0.774$ \\
\midrule
\multirow{16}{*}{\textbf{FIT-GNN}} &
  \multirow{4}{*}{GCN} & 0.1 & $0.810$ &  {$0.783$}\\
  &  & 0.3 &  \colorbox{green}{$\mathbf{0.844}$}&  \colorbox{green}{$\mathbf{0.826}$}\\
  &  & 0.5 &  {$0.836$}& $0.696$ \\
  &  & 0.7 & $0.793$ &  {$0.783$}\\
  \cmidrule(l){2-5}
  & \multirow{4}{*}{GAT} & 0.1 & $0.784$ & $0.652$ \\
  &  & 0.3 & $0.793$ & $0.522$ \\
  &  & 0.5 & $0.810$ & $0.739$ \\
  &  & 0.7 & $0.759$ & $0.522$ \\
  \cmidrule(l){2-5}
  & \multirow{4}{*}{SAGE} & 0.1 & $0.793$ & $0.696$ \\
  &  & 0.3 & $0.828$ & $0.696$ \\
  &  & 0.5 & $0.793$ & $0.696$ \\
  &  & 0.7 & $0.819$ &  \colorbox{green}{$\mathbf{0.826}$}\\
  \cmidrule(l){2-5}
  & \multirow{4}{*}{GIN} & 0.1 & $0.819$ & $0.652$ \\
  &  & 0.3 & $0.828$ &  {$0.783$}\\
  &  & 0.5 & $0.793$ & $0.478$ \\
  &  & 0.7 & $0.819$ & $0.739$ \\
\bottomrule
\end{tabular}%
}
%\caption{\fontsize{9}{9}\selectfont{Full, FIT-GNN}}
\label{tab:graph_cls_result2}
\end{subtable}%

\label{tab:graph_cls_result}
\end{table}
\subsection{Inference Time and Memory}
As mentioned in Section \ref{subsec:time_space}, our inference time is less than the standard GNN if certain conditions are met. In Table \ref{tab:inference_time_node} and \ref{tab:graph_inf_time}, we club all baseline models into one, since all \textbf{baselines are inferred on the whole graph}. We empirically show the reduction of time and space by our FIT-GNN model during inference. For recording the inference time, we use the Python \textbf{time} package to calculate the difference in time before and after the inference step.
\begin{table}[ht]
\caption{Inference time (sec) (Lower is better) comparison for Node-Level tasks and Graph-Level tasks. We show single-node prediction time using two different coarsening ratios for Node-Level tasks. Here, we used \textbf{Cluster Nodes}. We used \textbf{Gc-train-to-Gc-infer} experimental setup for the graph-level tasks.}
\begin{subtable}[c]{0.485\textwidth}
\centering
\subcaption{Node-Level Tasks}
\resizebox{0.9\linewidth}{!}{%
\begin{tabular}{@{}lrrr@{}}
\toprule
                          & \multicolumn{1}{l}{} & \multicolumn{2}{c}{\textbf{FIT-GNN}}           \\ \cmidrule(l){3-4} 
\textbf{Dataset}          & \textbf{Baselines}    & $\mathbf{r = 0.1}$    & $\mathbf{r = 0.3}$             \\ \midrule
\textit{Chameleon}        & $0.0027$             & $0.0016$          & $\mathbf{0.0014}$          \\
\textit{Squirrel}         & $0.0081$             & $0.0017$          & $\mathbf{0.0014}$          \\
\textit{Crocodile}        & $0.0070$             & $0.0015$          & \textbf{$\mathbf{0.0015}$} \\
\textit{Cora}             & $0.0026$             & $\mathbf{0.0019}$ & $0.0020$                   \\
\textit{Citeseer}         & $0.0031$             & $\mathbf{0.0018}$          & $0.0019$          \\
\textit{Pubmed}           & $0.0042$             & $0.0019$          & $\mathbf{0.0018}$          \\
\textit{DBLP}             & $0.0063$             & $0.0020$          & $\mathbf{0.0018}$          \\
\textit{Physics Coauthor} & $0.0252$             & $0.0020$          & $\mathbf{0.0017}$          \\
\textit{OGBN-Products}    & $0.1762$             & $0.0017$                & $\mathbf{0.0016}$          \\ \bottomrule
\end{tabular}%
}
\label{tab:inference_time_node}
\end{subtable}
\hfill
\begin{subtable}[c]{0.485\textwidth}
\centering
\subcaption{Graph-Level Tasks}
\resizebox{0.9\linewidth}{!}{%
\begin{tabular}{@{}lrrr@{}}
\toprule
                       & \multicolumn{1}{l}{} & \multicolumn{2}{c}{\textbf{FIT-GNN}} \\ \cmidrule(l){3-4} 
\textbf{Dataset}      & \textbf{Baselines}    & $\mathbf{r = 0.3}$     & $\mathbf{r = 0.5}$  \\ \midrule
\textit{ZINC (subset)} & $\mathbf{0.00184}$              & $\mathbf{0.00184}$  & $0.00190$        \\
\textit{QM9}           & $\mathbf{0.00173}$    & $0.00180$          & $0.00191$        \\
\textit{AIDS}          & $0.00163$             & $\mathbf{0.00155}$ & $0.00163$        \\
\textit{PROTEINS}      & $0.00165$             & $\mathbf{0.00160}$ & $0.00163$        \\ \bottomrule
\end{tabular}%
}
\label{tab:graph_inf_time}
\end{subtable}

\end{table}
\begin{figure}[ht]
    \centering
    \resizebox{\linewidth}{!}{%
    \begin{tikzpicture}
        \begin{axis}[
            ybar,
            bar width=3pt,
            width=\textwidth,
            height=6.5cm,
            ylabel={GPU Memory (MB, log scale)},
            xlabel={Dataset},
            symbolic x coords={Chameleon, Crocodile, Squirrel, Cora, Citeseer, Pubmed, DBLP, Physics, OGBN},
            xtick=data,
            log origin=infty,
            ymode=log,
            log basis y=10,
            ymin=0.1, ymax=3500,
            legend style={at={(1.02,0.5)},anchor=west,legend},
            enlarge x limits=0.08,
            grid=major,
            axis lines=box,
            x tick label style={font=\scriptsize},
        ]
        % FIT-GNN r=0.1 (Cluster Node)
        \addplot+[fill=blue!40] coordinates {
            (Chameleon,0.201)
            (Crocodile,1.127)
            (Squirrel,1.436)
            (Cora,1.249)
            (Citeseer,30.097)
            (Pubmed,0.584)
            (DBLP,5.785)
            (Physics,22.911)
            (OGBN,0.01)
        };
        % FIT-GNN r=0.3 (Cluster Node)
        \addplot+[fill=red!40] coordinates {
            (Chameleon,0.235)
            (Crocodile,1.132)
            (Squirrel,1.532)
            (Cora,0.618)
            (Citeseer,2.048)
            (Pubmed,0.542)
            (DBLP,2.803)
            (Physics,12.481)
            (OGBN,0.01)
        };
        % FIT-GNN r=0.5 (Cluster Node)
        \addplot+[fill=green!40] coordinates {
            (Chameleon,0.277)
            (Crocodile,1.654)
            (Squirrel,1.815)
            (Cora,0.661)
            (Citeseer,1.195)
            (Pubmed,0.544)
            (DBLP,1.761)
            (Physics,12.767)
            (OGBN,28.399)
        };
        % FIT-GNN r=0.7 (Cluster Node)
        \addplot+[fill=orange!60] coordinates {
            (Chameleon,0.418)
            (Crocodile,1.779)
            (Squirrel,2.869)
            (Cora,0.827)
            (Citeseer,1.607)
            (Pubmed,0.546)
            (DBLP,1.875)
            (Physics,13.688)
            (OGBN,0.01)
        };
        % Baseline
        \addplot+[fill=gray!60] coordinates {
            (Chameleon,2.078)
            (Crocodile,10.937)
            (Squirrel,8.614)
            (Cora,14.992)
            (Citeseer,47.170)
            (Pubmed,39.166)
            (DBLP,112.514)
            (Physics,1115.079)
            (OGBN,2840.706)
        };
        \legend{FIT-GNN $r=0.1$, FIT-GNN $r=0.3$, FIT-GNN $r=0.5$, FIT-GNN $r=0.7$, Baseline}
        \end{axis}
    \end{tikzpicture}
    }
    \caption{GPU memory consumption in MegaBytes (MB) (log scale) for FIT-GNN (Cluster Node) at different reduction ratios $r$ and Baseline, during inference. \textbf{variation\_neighborhoods} coarsening algorithm is used.}
    \label{fig:mem_consumption}
\end{figure}
In Table \ref{tab:inference_time_node}, we show the average time to predict for $1000$ queries (Prediction for Node $v_i$) in the Baseline and FIT-GNN models for node regression and classification tasks. The baseline model processes the entire graph, increasing inference time, especially for large graphs. In contrast, the FIT-GNN model only requires the relevant subgraph, resulting in faster predictions. For larger datasets like \textit{OGBN-Products}, baseline inference is not feasible with our computational capacity. Therefore, to show a comparison, we take a subset of \textit{OGBN-Products} with $165000$ nodes and $4340428$ edges with respect to which we see up to $\mathbf{100\times}$ \textbf{speedup in inference time}. Section \ref{sec:more_time_space} elaborates more on the inference time for datasets with a larger number of nodes. Traditional methods fail to infer for very large datasets due to memory constraints. However, our approach enables inference on the whole graph. Figure \ref{fig:mem_consumption} and Table \ref{tab:memory_analysis} show the detailed comparison of memory consumption for different datasets for both Extra Nodes and Cluster Nodes methods, along with the baseline memory, which highlights that the FIT-GNN model uses up to $\mathbf{100\times}$ \textbf{less memory} than the Baseline. Figure \ref{fig:cora_coarsening_time} shows the time taken to coarsen for different coarsening ratios for the \textit{Cora} dataset. As the coarsening ratio increases, the time also increases, since the number of subgraphs increases.

Table \ref{tab:graph_inf_time} compares the inference time of graph classification and graph regression tasks. We predict for randomly selected $1000$ graphs from the test split. The table shows how our method is comparable and sometimes faster than the Baseline. Also, we observe that the inference time increases with a higher coarsening ratio. Because with a higher coarsening ratio, the number of nodes in $G'$ increases, resulting in more edges. Overall, the inference time and memory for all the tasks mentioned are drastically less than the baselines while maintaining the performance.

\section{Conclusion} \label{sec:conclusion}
In this paper, we have focused on inference time and memory and presented a new way to utilize existing graph coarsening algorithms for GNNs. We have provided theoretical insight corresponding to the number of nodes in the graph for which the FIT-GNN model reduces the time and space complexity. Empirically, we have shown that our method is comparable to the uncoarsened Baseline while being orders of magnitude faster in terms of inference time and consuming a fraction of the memory. Future directions include extending the FIT-GNN methodology to directed, weighted, heterogeneous, and temporal graphs, alongside tackling data-intensive real-world challenges like weather forecasting. Theoretically, we aim to quantify the information loss for the Cluster Node method, investigate its connection to Extra Nodes, and explore novel strategies beyond both to further mitigate graph partitioning loss.

\section{Acknowledgment}
Anirban Dasgupta acknowledges the support by SERB MATRICS and SERB CRG grants and the support from N Rama Rao Chair Professorship.

\bibliography{new}
\bibliographystyle{tmlr}

\appendix
%\appendix
\begin{center}
    \Large \textbf{Appendix}
\end{center}
\section{More on Extra Nodes and Cluster Nodes}
\subsection{Extra Nodes}
\noindent\textbf{Lemma \ref{lemma:extra_node_good}.} \textit{Models with $1$ layer of GNN cannot distinguish between $G$ and $\mathcal{G}_s$ when \textbf{Extra Nodes} method is used.}
\begin{proof}
\label{app:proof_lemma_extra_node_good}
    There are $2$ sets of nodes in $G_i$ as follows: 
    \begin{itemize}
        \item The set $S_1$ of nodes with 1-hop neighbours in $G_i$.
        \item The set $S_2$ of nodes with not all 1-hop neighbors in $G_i$
    \end{itemize}
    Let $\mathcal{I}^1_i$ be the number of nodes whose information does not get passed on after $1$ layer of GNN for $G_i$. 
    \begin{equation*}
        \mathcal{I}^1_i = \left| \bigcup_{v\in S_2} \mathcal{N}_1(v) - V(G_i)\right|
    \end{equation*}    
    \begin{equation*}
    \begin{split}
        \textup{Now, }\left|\bigcup_{v\in S_1}\left\{ u\in V:u\in\mathcal{N}_1(v) \wedge u \notin G_i\right\}\right| = 0 \\
        \textup{Also, } \mathcal{E}_{G_i} = \bigcup_{v\in S_2}\left\{ u\in V:u\in \mathcal{N}_1(v) \wedge u \notin G_i\right\} \\
    \end{split} 
    \end{equation*}
    \begin{equation*}
        \Rightarrow \left|\mathcal{E}_{G_i}\right| = \mathcal{I}^1_i
    \end{equation*}
    Hence, when \textbf{Extra Nodes} is used, 1 Layer GNN model cannot distinguish $G$ and $\mathcal{G}_s$.
\end{proof}
To understand the information loss when taking $2$ Layers of GNN, we divide $G_i$ into $3$ sets of nodes. 
\begin{itemize}[leftmargin=*]
    \item $S_1$: Nodes with 1-hop and 2-hop neighbours in $G_i$.
    \item $S_2$: Nodes with 1-hop neighbours in $G_i$ but $\exists v$ in 2-hop neighbourhood that is not in $G_i$.
    \item $S_3$: Node where $\exists v$ in 1-hop and 2-hop neighbourhood that is not in $G_i$.
\end{itemize}
Let $\mathcal{I}^2_i$ be the number of nodes whose information doesn't get passed on after $2$ layer of GNN for $G_i$.
\begin{equation*}
    \mathcal{I}^2_i = \left| \bigcup_{v\in S_3} \mathcal{N}_2(v) - V(G_i)\right|
\end{equation*}

When we use \textbf{Extra Nodes}, the information loss can be written as follows:
\begin{equation*}
    \mathcal{I}^2_i = \left| \bigcup_{v\in S_3} \mathcal{N}_2(v) - V(G_i) - \mathcal{E}_{G_i}\right|
\end{equation*}

The above entity will depend on the density of the subgraphs formed and the number of connections each subgraph shares with each other. An algorithm with an objective to reduce this entity for all subgraphs will lose the least amount of information when \textbf{Extra Nodes} method is used. 
\subsection{Cluster Nodes}
Given a partition matrix $P$, the features of the coarsened node are $X' = P^{\top}X$. Given a normalized partition matrix, the features of a node $v'_i\in G'$ is the degree-weighted average of the features of nodes in $C_i$. This is one of the functions $f$ used to create the features of the cluster node from $C_i$.

Previously, according to Lemma \ref{lemma:extra_node_good}, there is no information loss when using the $1$ layer of GNN and \textbf{Extra Nodes} method. It was also easy to quantify in terms of the number of nodes. However, it is different for \textbf{Cluster Nodes}. Let us discuss the issues first.

\begin{itemize}[leftmargin=*]
   
 \item Only a weighted version of node information is shared with the subgraph. Suppose $v_c,v_d\in G_i$ is connected to $v_a,v_b\in G_j$. Then the information contributed by these nodes is $\frac{d_ax_a + d_bx_b}{\sum_p d_p}$. Here $d_p$ represents the degree of node $v_p$, and $x_p$ represents the feature of node $v_p$.

\item  Other node information will also be shared which is $\frac{\sum_{p\neq v_a,v_b}d_px_p}{\sum_pd_p}$. This will capture further dependencies. 
\end{itemize}
The performance of \textbf{Cluster Node} will depend on some distance or similarity metric between $x_c,x_d$ and $f(C_j)$.
\section{More Algorithms}
\begin{algorithm}[ht]
    \centering
    \caption{Training GNN (Train on $G'$) \citep{10.1145/3447548.3467256}}\label{alg:Gc_train_node}
    \footnotesize
    \begin{algorithmic}[1]
    \Require 
    $G=(V,E,X)$; Labels $Y$;  Node-Model $M$; Loss $\ell$; Number of Layers $L$
    \State Apply a coarsening algorithm on $G$, and output a normalized partition matrix $P$;
    \State Construct Coarsened graph $G'$ using $P$;
    \State Construct feature matrix for $G'$ by $X'=P^{\top}X$;
    \State Construct labels for $G'$ by $Y'=\arg\max(P^{\top}Y)$ ;
    \State $O = M(L, A', D', X')$; \Comment{ Algorithm \ref{alg:node_model}}
    \State $Loss = \ell(O, Y')$;
    \State Train $M$ to minimize $Loss$;
\end{algorithmic}
\end{algorithm}

\begin{algorithm}[ht]
        \centering
	\caption{Node-Model$(L,A,D,X)$}\label{alg:node_model}
        \footnotesize
	\begin{algorithmic}[1]
		\Require Number of Layers $L$; $A$; $D$; $X^{(0)} = X$;
            \For {$i=1$ to $L$}

                \State $X^{(i)} = \sigma(\Tilde{D}^{-\frac{1}{2}}\Tilde{A}\Tilde{D}^{-\frac{1}{2}}X^{(i-1)}\mathcal{W}^{(i-1)})$ \Comment{Equation \ref{eq:gnn_propagation}}

            \EndFor
            \State $Z = X^{(L)}\mathcal{W}^{(L)}$ \\
		\Return $Z$
        \end{algorithmic}
    \end{algorithm}
\begin{algorithm}[ht]
        \centering
	\caption{Graph-Model-$G'(L,A',D',X')$}\label{alg:graph_model_Gc}
        \footnotesize
	\begin{algorithmic}[1]
		\Require 
		Number of Layers $L; A'; D'; X^{(0)} = X'$ 
            \For {$i=1$ to $L$}
                \State $X^{(i)}$=$\sigma(\Tilde{D'}^{-\frac{1}{2}}\Tilde{A'}\Tilde{D'}^{-\frac{1}{2}}X^{(i-1)}\mathcal{W}^{(i-1)})$\Comment{Equation \ref{eq:gnn_propagation}}
            \EndFor
            \State $\bar{X} = \textup{MaxPooling}(X^{(L)})$
            \State $Z = \bar{X}\mathcal{W}^{(L)}$ \\
		\Return $Z$
	\end{algorithmic}
    \end{algorithm}
\section{More on Time and Space complexity}
\label{sec:more_time_space}
\subsection{Comparison with Baseline}
\begin{comment}
\begin{figure}
    \centering
    \includegraphics[width=0.7\linewidth]{}
    \caption{\fontsize{9}{9}\selectfont{The plot shows the feasibility region (green) corresponding to the coarsening ratio and Number of nodes (In order of Millions) in the graph for a fixed feature dimension of $d=200$.}}
    \label{fig:feasibility}
\end{figure}
\end{comment}

In Lemma \ref{lemma:time_space_conditions}, we provide the conditions that need to be satisfied for better space and time complexities.

For node-level tasks, to obtain lower FIT-GNN inference time complexity as compared to classical GNN, we want the following as per asymptotic analysis,
\begin{enumerate}
    \item Baseline vs. Single-Node Inference:

    \begin{equation}
    \label{eq:time_diff}
        \underset{
    \textup{\tiny Baseline Complexity}}{\underbrace{n^2d + nd^2}} \ge  \underset{\textup{ \tiny FIT-GNN Complexity}}{\underbrace{\max_i^k[\bar{n}_i^2d + \bar{n}_id^2]}}
    \end{equation}

    \item Baseline vs. Full-Graph Inference: 
    \begin{equation}
    \label{eq:full_graph_inference}
        \underset{\textup{\tiny Baseline Complexity}}{\underbrace{n^2d + nd^2}} \geq \underset{\textup{ \tiny FIT-GNN Complexity}}{\underbrace{\sum_{i=1}^{k}\bigl[ (n_i + \phi_i)^2d + (n_i + \phi_i)d^2 \bigr]}}
    \end{equation}
\end{enumerate}
%%%%%%%%%% Feasibility Analysis Plot %%%%%%%%%%
\begin{figure}[ht]
    \centering
    \includegraphics[width=\textwidth]{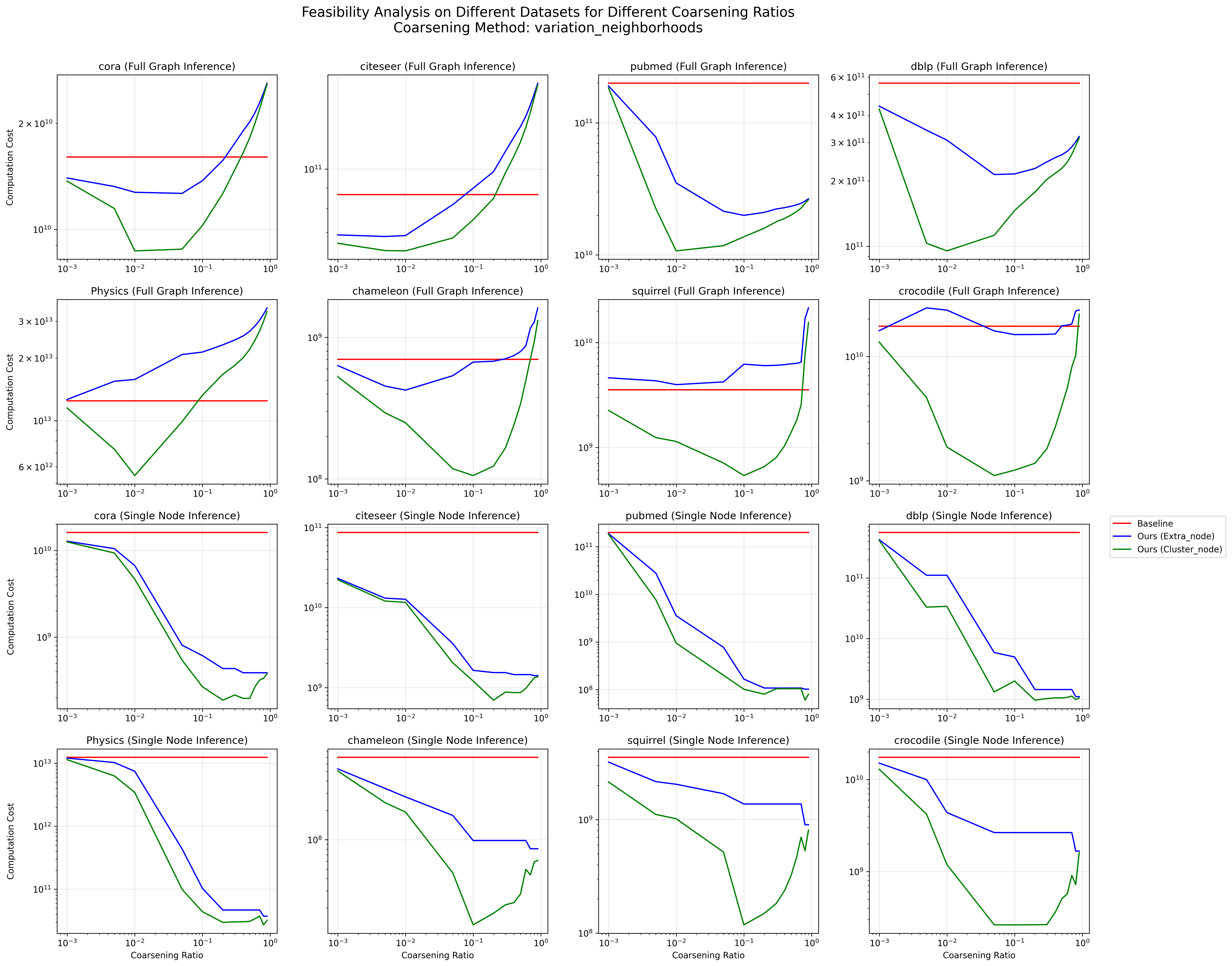}
    \caption{\fontsize{9}{9}\selectfont{The figure shows the feasibility of our inference methods for different node-level datasets against coarsening ratios. Two important inference setups for each dataset: single-node inference with the computational cost $\order{\max_{i}(\bar{n_i}^2d + \bar{n_i}d^2) + n}$, and full-graph inference with computational cost $\order{\sum_{i=1}^{k}(\bar{n_i}^2d + \bar{n_i}d^2)}$, are compared against the baseline (classical GNN) computational cost of $\order{n^2d + nd^2}$. The coarsening method used is variation\_neighborhoods. Both axes are in logarithmic scale (base 10).}}
    \label{fig:feasibility_analysis}
\end{figure}
Figure \ref{fig:feasibility_analysis} demonstrates the feasibility of our FIT-GNN method for different coarsening ratios with regards to full-graph inference time complexity. We empirically compute the LHS (baseline is classical GNN) and RHS of the Inequality \ref{eq:time_diff} and \ref{eq:full_graph_inference} for various coarsening ratios and plot the results for multiple datasets. The coarsening algorithm used is variation\_neighborhoods. 

We make the following inferences:
\begin{enumerate}
    \item There is a clear dataset-dependent computational-cost trade-off across different coarsening ratios. Moreover, the minimum inference time for our approach occurs at different coarsening ratios for different datasets. 
    \item Under single-node inference setup, FIT-GNN demonstrates increasingly superior inference time performance as compared to classical GNN with increasing coarsening ratio. This is because the individual subgraph sizes decrease with increasing coarsening ratio.
    \item We observe lower inference time complexity of Cluster Node method as compared to Extra Node method under both inference setups. This is expected as $|C_{G_i}| \leq |\mathcal{E}_{{G_i}}|, \ \forall i \in \{1,2, \dots, k\}$.
\end{enumerate}

\subsection{Preprocessing Time Complexity}
A comparative analysis of our preprocessing (coarsening) overhead against other state-of-the-art scaling methods has been conducted. As shown in the Table \ref{tab:preprocessing_complexity} below, our preprocessing time complexity is comparable to, and often faster than, existing SOTA methods. Thus, even when end-to-end latency is considered, FIT-GNN maintains its efficiency advantage.

\begin{table}[htbp]
    \centering
    \caption{Asymptotic time complexities for different methods. $N$ is the number of nodes in the original graph, $M$ is the number of edges and $C$ is the number of classes; rest of the symbols follow notation defined in Section \ref{sec:prelim} and \ref{sec:methodology}. \textcolor{blue}{Blue} and \textcolor{red}{Red} indicate \textcolor{blue}{CPU} and \textcolor{red}{GPU} respectively - the hardware used for computation. Full Graph setup indicates testing on all the test nodes in a graph while Single Node setup represents testing for a single test node.}
    \resizebox{\textwidth}{!}{
    \begin{tabular}{lccccc}
        \toprule
        \textbf{Method} & \textbf{Preprocessing} & \textbf{Training} & \textbf{Inference (Full Graph)} & \textbf{Inference (Single Node)} & \textbf{Model-Agnostic} \\
        \midrule
        \textbf{SGGC} & \textcolor{blue}{$M + N$} & \textcolor{red}{$k^2d + kd^2$} & \textcolor{red}{$N^2d + Nd^2$} & \textcolor{red}{$N^2d + Nd^2$} & \checkmark \\
        \addlinespace
        \textbf{GCOND} & \textcolor{red}{$\begin{matrix}
C(N^2 + k^2)d\\+C(N + k)d^2
\end{matrix}$} & \textcolor{red}{$k^2d + kd^2$} & \textcolor{red}{$N^2d + Nd^2$} & \textcolor{red}{$N^2d + Nd^2$} & $\times$ \\
        \addlinespace
        \textbf{BONSAI} & \textcolor{blue}{$M + N$} & \textcolor{red}{$k^2d + kd^2$} & \textcolor{red}{$N^2d + Nd^2$} & \textcolor{red}{$N^2d + Nd^2$} & \checkmark \\
        \addlinespace
        \textbf{FIT-GNN} & \textcolor{blue}{$M + N$} & \textcolor{red}{$\begin{matrix}
kd^2 + k^2d\\+\sum_{i=1}^{k}[\bar{n}_i^2d
+\bar{n}_id^2]
\end{matrix}$} & \textcolor{red}{$\sum_{i=1}^k[\bar{n}_i^2d + \bar{n}_id^2]$} & \textcolor{red}{$\max_i^k[\bar{n}_i^2d + \bar{n}_id^2]$} &\checkmark \\
        \bottomrule
    \end{tabular}
    }
    \label{tab:preprocessing_complexity}
\end{table}

We further extend our complexity analysis to the practical scenario, where a new test node $v$ is introduced to the graph $G$. This setting is critical for real-world applications where graphs evolve dynamically. We compare three distinct inference strategies for this new node:
\begin{enumerate}
    \item \textbf{Full Graph}: Construct $G_{new}$ by adding $v$ to the original full graph $G$ and infer on $G_{new}$ using the pre-trained network.
    \item \textbf{2nd-hop Neighborhood}: Sample only the 2nd-hop neighborhood of $v$ from $G$ and infer using the pre-trained network (assuming a 2-layer architecture).
    \item \textbf{FIT-GNN Subgraph}: Assign $v$ to a relevant subgraph $G_i$ (e.g., the one containing the majority of its 1st-hop neighborhood) and append the necessary Extra/Cluster nodes. Inference is then performed strictly within the modified subgraph $G_i$.
\end{enumerate}

We present the time complexity comparison for these methods in the Table \ref{tab:inductive_inference} below. The most significant advantage of the FIT-GNN approach is that inference complexity depends solely on the subgraph size $\bar{n}_i$ ($\bar{n}_i \ll n$). In contrast, the Full Graph approach remains tied to the global parameter $n$ due to the necessity of accessing the full adjacency matrix.

\begin{table}[ht]
    \centering
    \footnotesize
    \caption{Time complexity of different inference strategies when a new node $v$ is added to graph $G$}
    \begin{tabular}{lll p{3.5cm}}
        \toprule
        \textbf{Inference Strategy} & \textbf{Preprocessing Complexity} & \textbf{Inference Complexity} & \textbf{Notes} \\
        \midrule
        \textbf{Full Graph} & $\order{1}$ & $\order{n^2d + nd^2}$ & Computationally expensive; requires processing the entire graph for one new node. \\
        \addlinespace
        \textbf{2nd-hop Neighborhood} & $\order{|\mathcal{N}_1(v)|\Delta^2}$ & $\order{|\mathcal{N}_2(v)|^2d + |\mathcal{N}_2(v)|d^2}$ & $|\mathcal{N}_j(v)|$ is the number of nodes in j-hop neighborhood. $\Delta$ is the maximum degree in a graph. \\
        \addlinespace
        \textbf{FIT-GNN Subgraph} & $\order{k}$ & $\order{\bar{n}_i^2d + \bar{n}_id^2}$ & Assuming $v$ has maximum neighbors in $G_i$ which has $n_i$ many nodes. \\
        \bottomrule
    \end{tabular}
    \label{tab:inductive_inference}
\end{table}

\section{Dataset Description}
\label{sec:dataset_description}
\vspace{-0.7cm}
\begin{table}[ht]
\centering
\begin{subtable}[t]{0.49\textwidth}
\centering
\caption{\fontsize{9}{9}\selectfont{Summary of datasets used for Graph classification}}
%Summary of datasets used for Graph classification Dataset
\resizebox{\textwidth}{!}{%
\begin{tabular}{lrrrrr}
\hline
	\textbf{Dataset} &
  	\textbf{\begin{tabular}[c]{@{}c@{}}Number of\\ Graphs\end{tabular}} &
  	\textbf{\begin{tabular}[c]{@{}c@{}}Average\\ Nodes\end{tabular}} &
  	\textbf{\begin{tabular}[c]{@{}c@{}}Average\\ Edges\end{tabular}} &
  	\textbf{Features} &
  	\textbf{Classes} \\
\hline
	\textit{PROTEINS} &
  $1113$ &
  $19$ &
  $72$ &
  $3$ &
  $2$ \\
	\textit{AIDS} &
  $2000$ &
  $7$ &
  $16$ &
  $38$ &
  $2$ \\
\hline
\end{tabular}%
}
\label{tab:graph_cls_dataset}
\end{subtable}%
\hfill
\begin{subtable}[t]{0.49\textwidth}
\centering
\caption{\fontsize{9}{9}\selectfont{Summary of datasets used for Graph regression}}
%Summary of datasets used for Graph regression Dataset
\resizebox{\textwidth}{!}{%
\begin{tabular}{lrrrrr}
\hline
	\textbf{Dataset} &
  	\textbf{\begin{tabular}[c]{@{}c@{}}Number of\\ Graphs\end{tabular}} &
  	\textbf{\begin{tabular}[c]{@{}c@{}}Average\\ Nodes\end{tabular}} &
  	\textbf{\begin{tabular}[c]{@{}c@{}}Average\\ Edges\end{tabular}} &
  	\textbf{Features} &
  	\textbf{\begin{tabular}[c]{@{}c@{}}Number of\\ Targets\end{tabular}} \\
\hline
	\textit{QM9} &
  $130831$ &
  $8$ &
  $18$ &
  $11$ &
  $19$ \\
	\textit{ZINC (subset)} &
  $10000$ &
  $11$ &
  $25$ &
  $1$ &
  $1$ \\
\hline
\end{tabular}%
}
\label{tab:graph_reg_dataset}
\end{subtable}%
\\[1.5em]
\begin{subtable}[t]{0.49\textwidth}
\centering
\caption{\fontsize{9}{9}\selectfont{Summary of datasets used for Node classification}}
%Summary of datasets used for Node classification Dataset
\resizebox{\textwidth}{!}{%
\begin{tabular}{lrrrr}
\hline
	\textbf{Dataset} & \textbf{Nodes} & \textbf{Edges} & \textbf{Features} & \textbf{Classes} \\
\hline
	\textit{Cora}     & $2708$  & $5278$  & $1433$ & $7$ \\
	\textit{Citeseer} & $3327$  & $4552$  & $3703$ & $6$ \\
	\textit{Pubmed}   & $19717$ & $44324$ & $500$  & $3$ \\
	\textit{DBLP}     & $17716$ & $52867$ & $1639$ & $4$ \\
	\textit{\begin{tabular}[l]{@{}l@{}}Physics\\ Coauthor\end{tabular}} & $34493$        & $247962$       & $8415$            & $5$              \\
	\textit{OGBN-Products}     & $2449029$ & $61859140$ & $100$ & $47$ \\
\hline
\end{tabular}%
}
\label{tab:node_cls_dataset}
\end{subtable}%
\hfill
\begin{subtable}[t]{0.49\textwidth}
\centering
\caption{\fontsize{9}{9}\selectfont{Summary of datasets used for Node regression}}
%Summary of datasets used for Node regression Dataset
\resizebox{\textwidth}{!}{%
\begin{tabular}{lrrrr}
\hline
	\textbf{Dataset} & \textbf{Nodes} & \textbf{Edges} & \textbf{Features} & \textbf{\begin{tabular}[c]{@{}c@{}}Number of\\ Targets\end{tabular}} \\
\hline
	\textit{Chameleon} & $2277$  & $31396$  & $128$ & $1$ \\
	\textit{Squirrel}  & $5201$  & $198423$ & $128$ & $1$ \\
	\textit{Crocodile} & $11631$ & $170845$ & $128$ & $1$ \\
\hline
\end{tabular}%
}
\label{tab:node_reg_dataset}
\end{subtable}%
\end{table}

\section{Experiment Details (Parameters and Device Configuration)}
\label{sec:hyperparameter}
For node classification and node regression tasks, we use Adam Optimizer with a learning rate of $0.01$ and $L_2$ regularization with $0.0005$ weight. We use Adam Optimizer with a learning rate of $0.0001$ and $L_2$ regularization with $0.0005$ weight for graph-level tasks. We set epochs to 20 and the number of layers of GCN to 2 for training. We set the hidden dimensions for each layer of GCN to $512$. 

The device configurations are Intel(R) Xeon(R) Gold 5120 CPU @ $2.20$GHz, $256$GB RAM, NVIDIA A100 $40$GB GPU. We use Pytorch Geometric to train our models. 

\section{Results}

Table \ref{tab:node_cls_result_appendix} presents the results on the node classification task with all the coarsening ratios $\{0.1, 0.3, 0.5, 0.7\}$. We did not compare our experiments with methods such as Cluster-GCN\citep{10.1145/3292500.3330925} because the method focuses on reducing training time by sampling subgraphs for several iterations. During the inference phase, however, it still performs inference on the entire graph. Consequently, its inference performance would be at most comparable to that of a standard GCN.  

Figure \ref{fig:cora_coarsening_time} shows the time comparison for creating the subgraphs for \textit{Cora} Dataset for different coarsening ratios. It is observed that the None method takes the least amount of time, which is intuitive since the subgraphs are not appended with additional nodes. For the Extra Node Method, the time increases since the subgraph size increases as new nodes are appended. For Cluster Node, we observe the maximum time taken since we are not only looking at the neighboring nodes that are lost due to partition, but also are keeping a track of the cluster to which these neighboring nodes belong, corresponding to which, we add the cross-cluster edges.

% Node Classification results
\begin{table}[t]
\centering
\caption{\fontsize{9}{9}\selectfont{Results for node classification tasks with accuracy as the metric (higher is better). We use the \textbf{Cluster Nodes} method to append additional nodes to subgraphs and \textbf{Gs-train-to-Gs-infer} as experimental setup. \textbf{variation\_neighborhoods} coarsening algorithm is used.}}
\resizebox{0.9\textwidth}{!}{%
\begin{tabular}{@{}llrrrrrr@{}}
\toprule
\multirow{2}{*}{\textbf{Methods}} &
  \multirow{2}{*}{\textbf{Models}} &
  \multirow{2}{*}{\textbf{Reduction Ratio (r)}} &
  \multicolumn{5}{c}{\textbf{Dataset}} \\ \cmidrule(l){4-8} 
                                &                      &     & \textit{Cora}    & \textit{Citeseer} & \textit{Pubmed}   & \textit{DBLP}    & \textit{Physics} \\ \midrule
\multirow{2}{*}{\textbf{Full}} &
  GCN &
  1.0 &
  $0.821\pm 0.002$ &
  $0.706\pm 0.002$ &
  $0.768\pm 0.020$ &
  $0.724\pm 0.002$ &
  $0.933\pm 0.000$ \\ \cmidrule(l){2-8} & GAT                  & 1.0 & $0.809\pm 0.004$ & $0.700\pm 0.003$ & $0.740\pm 0.004$ & $0.713\pm 0.004$ & $0.909\pm 0.008$ \\ \midrule
\multirow{8}{*}{\textbf{SGGC}}  & \multirow{4}{*}{GCN} & 0.1 & $0.752\pm 0.002$ & $0.704\pm 0.003$ & $0.730\pm 0.004$ & $0.728\pm 0.019$ & $0.892\pm 0.010$ \\
                                &                      & 0.3 & $0.808\pm 0.003$ & $0.700\pm 0.001$ & $0.773\pm 0.002$ & $0.737\pm 0.011$ & $0.928\pm 0.003$ \\
                                &                      & 0.5 & $0.808\pm 0.001$ & $0.716\pm 0.002$ & \colorbox{green}{$\mathbf{0.793\pm 0.002}$} & $0.733\pm 0.008$ & $0.931\pm 0.004$ \\
                                &                      & 0.7 & $0.805\pm 0.002$ & $0.711\pm 0.003$ & $0.786\pm 0.003$ & $0.733\pm 0.014$ & $0.931\pm 0.003$ \\ \cmidrule(l){2-8} 
                                & \multirow{4}{*}{GAT} & 0.1 & $0.630\pm 0.001$ & $0.605\pm 0.025$ & $0.689\pm 0.033$ & $0.606\pm 0.017$ & $0.854\pm 0.008$ \\
                                &                      & 0.3 & $0.686\pm 0.010$ & $0.635\pm 0.021$ & $0.773\pm 0.002$ & $0.626\pm 0.028$ & $0.870\pm 0.016$ \\
                                &                      & 0.5 & $0.641\pm 0.026$ & $0.635\pm 0.024$ & $0.683\pm 0.024$ & $0.604\pm 0.047$ & $0.842\pm 0.018$ \\
                                &                      & 0.7 & $0.643\pm 0.019$ & $0.570\pm 0.041$ & $0.733\pm 0.023$ & $0.613\pm 0.018$ & $0.851\pm 0.026$ \\ \midrule
\multirow{8}{*}{\textbf{GCOND}} & \multirow{4}{*}{GCN} & 0.1 & $0.789\pm 0.003$ & $0.676\pm 0.020$ & $0.785\pm 0.003$ & $0.718\pm 0.007$ & $0.884\pm 0.008$ \\
                                &                      & 0.3 & $0.806\pm 0.003$ & $0.722\pm 0.001$ & $0.779\pm 0.002$ & $0.748\pm 0.001$ & $0.876\pm 0.011$ \\
                                &                      & 0.5 & $0.808\pm 0.004$ & \colorbox{green}{$\mathbf{0.730\pm 0.007}$} & $0.773\pm 0.004$ & $0.755\pm 0.002$ & $0.804\pm 0.005$ \\
                                &                      & 0.7 & $0.793\pm 0.000$ & $0.680\pm 0.012$ & $0.775\pm 0.002$ & $0.751\pm 0.000$ & $0.914\pm 0.004$ \\ \cmidrule(l){2-8} 
                                & \multirow{4}{*}{GAT} & 0.1 & $0.657\pm 0.067$ & $0.553\pm 0.116$ & $0.511\pm 0.079$ & $0.475\pm 0.057$ & OOM \\
                                &                      & 0.3 & $0.549\pm 0.080$ & $0.628\pm 0.095$ & $0.370\pm 0.079$ & $0.481\pm 0.034$ & OOM \\
                                &                      & 0.5 & $0.712\pm 0.059$ & $0.501\pm 0.085$ & $0.409\pm 0.044$ & $0.464\pm 0.030$ & OOM \\
                                &                      & 0.7 & $0.706\pm 0.062$ & $0.453\pm 0.118$ & $0.416\pm 0.036$ & $0.462\pm 0.033$ & OOM \\ \midrule
\multirow{8}{*}{\textbf{BONSAI}} &
  \multirow{4}{*}{GCN} &
  0.1 &
  $0.729\pm 0.008$ &
  $0.576\pm 0.006$ &
  $0.600\pm 0.098$ &
  $0.699\pm 0.031$ &
  $0.822\pm 0.014$ \\
                                &                      & 0.3 & $0.722\pm 0.011$ & $0.585\pm 0.003$  & $0.538\pm 0.103$  & $0.683\pm 0.039$ & $0.798\pm 0.007$ \\
                                &                      & 0.5 & $0.701\pm 0.012$ & $0.666\pm 0.001$  & $0.670\pm 0.084$  & $0.702\pm 0.049$ & $0.832\pm 0.016$ \\
                                &                      & 0.7 & $0.705\pm 0.009$ & $0.582\pm 0.007$  & $0.474\pm 0.020$  & $0.694\pm 0.023$ & $0.844\pm 0.015$ \\ \cmidrule(l){2-8} 
                                & \multirow{4}{*}{GAT} & 0.1 & $0.685\pm 0.004$ & $0.589\pm 0.002$  & $0.446\pm 0.014$  & $0.642\pm 0.015$ & $0.803\pm 0.005$ \\
                                &                      & 0.3 & $0.680\pm 0.014$ & $0.566\pm 0.003$  & $0.540\pm 0.096$  & $0.665\pm 0.027$ & $0.791\pm 0.010$ \\
                                &                      & 0.5 & $0.683\pm 0.006$ & $0.579\pm 0.004$  & $0.523\pm 0.133$  & $0.681\pm 0.025$ & $0.812\pm 0.016$ \\
                                &                      & 0.7 & $0.703\pm 0.010$ & $0.594\pm 0.002$  & $0.497\pm 0.028$  & $0.705\pm 0.047$ & $0.799\pm 0.010$ \\ \midrule
\multirow{8}{*}{\textbf{FIT-GNN}} &
  \multirow{4}{*}{GCN} &
  0.1 &
  $0.790\pm 0.004$ &
  $0.701\pm 0.001$ &
  $0.757\pm 0.002$ &
  $0.755\pm 0.001$ &
  $0.909\pm 0.000$ \\
                                &                      & 0.3 & $0.800\pm 0.003$ & $0.679\pm 0.003$  & $0.754\pm 0.001$  & \colorbox{green}{$\mathbf{0.786\pm 0.001}$} & \colorbox{green}{$\mathbf{0.932\pm 0.000}$} \\
                                &                      & 0.5 & \colorbox{green}{$\mathbf{0.829\pm 0.002}$} & $0.668\pm 0.003$  & $0.761\pm 0.004$  & $0.700\pm 0.001$ & $0.926\pm 0.000$ \\
                                &                      & 0.7 & $0.813\pm 0.002$ & $0.664\pm 0.001$  & $0.756\pm 0.002$ & $0.746\pm 0.001$ & $0.925\pm 0.000$ \\ \cmidrule(l){2-8} 
                                & \multirow{4}{*}{GAT} & 0.1 & $0.773\pm 0.021$ & $0.694\pm 0.003$  & $0.737\pm 0.002$  & $0.729\pm 0.010$ & $0.887\pm 0.003$ \\
                                &                      & 0.3 & $0.761\pm 0.004$ & $0.655\pm 0.005$  & $0.754\pm 0.002$  & $0.771\pm 0.002$ & $0.885\pm 0.004$ \\
                                &                      & 0.5 & $0.792\pm 0.003$ & $0.669\pm 0.004$  & $0.760\pm 0.003$  & $0.720\pm 0.002$ & $0.885\pm 0.003$ \\
                                &                      & 0.7 & $0.797\pm 0.003$ & $0.662\pm 0.003$  & $0.759\pm 0.002$  & $0.739\pm 0.002$ & $0.903\pm 0.003$ \\ \bottomrule
\end{tabular}%
}
\label{tab:node_cls_result_appendix}
\end{table}
\vspace{-0.5cm}
\begin{figure}[h]
    \centering
    \includegraphics[width=0.5\linewidth]{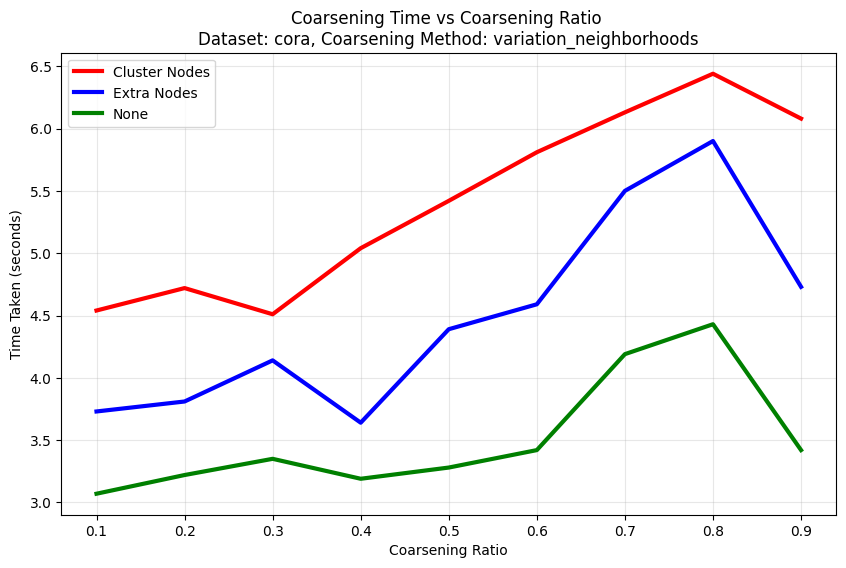}
    \caption{\fontsize{9}{9}\selectfont{The plot shows how the coarsening time varies for the \emph{Cora} dataset for different coarsening ratios using different methods of appending nodes to the subgraphs. \textbf{variation\_neighborhoods} coarsening algorithm is used.}}
    \label{fig:cora_coarsening_time}
\end{figure}

Table \ref{tab:memory_analysis} shows the maximum GPU memory consumption during inference for different datasets with different coarsening ratios and methods of appending nodes. Note that this memory consumption is to store the graph and the weight matrix in the memory only. 

%Summary of memory consumption of datasets used for node-level tasks. All units are in MegaBytes (MB)
\begin{table}[ht]
\centering
\caption{\fontsize{9}{9}\selectfont{Summary of maximum GPU memory consumption during inference of datasets used for node-level tasks. All units are in MegaBytes (MB)}}
\resizebox{0.8\textwidth}{!}{%
\begin{tabular}{@{}llrrrrr@{}}
\toprule
\multirow{2}{*}{\textbf{Dataset}}          & \multirow{2}{*}{\textbf{Appending Nodes}} & \multicolumn{4}{c}{\textbf{FIT-GNN}}      & \multirow{2}{*}{\textbf{Baseline}} \\ \cmidrule(lr){3-6}
                                    &              & \textbf{r$\mathbf{=0.1}$}    & \textbf{r$\mathbf{=0.3}$}    & \textbf{r$\mathbf{=0.5}$}    & \textbf{r$\mathbf{=0.7}$}    &                            \\ \midrule
\multirow{2}{*}{\textit{Chameleon}} & Cluster Node & $0.201$  & $0.235$  & $0.277$  & $0.418$  & \multirow{2}{*}{$2.078$}   \\
                                    & Extra Node   & $0.622$  & $0.622$  & $0.622$  & $0.564$  &                           \\\midrule 
\multirow{2}{*}{\textit{Crocodile}} & Cluster Node & $1.127$  & $1.132$  & $1.654$  & $1.779$  & \multirow{2}{*}{$10.937$}  \\
                                    & Extra Node   & $4.586$  & $4.586$  & $4.586$  & $4.586$  &                            \\\midrule 
\multirow{2}{*}{\textit{Squirrel}}  & Cluster Node & $1.436$  & $1.532$  & $1.815$  & $2.869$  & \multirow{2}{*}{$8.614$}   \\
                                    & Extra Node   & $7.190$  & $7.190$  & $7.190$  & $7.190$  &                            \\\midrule 
\multirow{2}{*}{\textit{Cora}}      & Cluster Node & $1.249$  & $0.618$  & $0.661$  & $0.827$  & \multirow{2}{*}{$14.992$}  \\
                                    & Extra Node   & $1.906$  & $1.103$  & $0.970$  & $0.970$  &                            \\\midrule 
\multirow{2}{*}{\textit{Citeseer}}  & Cluster Node & $30.097$ & $2.048$  & $1.195$  & $1.607$  & \multirow{2}{*}{$47.170$}  \\
                                    & Extra Node   & $30.097$ & $2.490$  & $1.835$  & $1.835$  &                            \\\midrule 
\multirow{2}{*}{\textit{Pubmed}}    & Cluster Node & $0.584$  & $0.542$  & $0.544$  & $0.546$  & \multirow{2}{*}{$39.166$}  \\
                                    & Extra Node   & $0.778$  & $0.562$  & $0.562$  & $0.562$  &                            \\\midrule 
\multirow{2}{*}{\textit{DBLP}}      & Cluster Node & $5.785$  & $2.803$  & $1.761$  & $1.875$  & \multirow{2}{*}{$112.514$} \\
                                    & Extra Node   & $28.271$ & $6.008$  & $2.308$  & $2.207$  &                            \\\midrule 
\multirow{2}{*}{\textit{Physics Coauthor}} & Cluster Node                              & $22.911$ & $12.481$ & $12.767$ & $13.688$ & \multirow{2}{*}{$1115.079$}        \\
                                    & Extra Node   & $41.066$ & $19.936$ & $19.936$ & $19.936$ &                            \\\midrule 
\textit{OGBN-Products}              & Cluster Node & --       & --       & $28.399$ & --       & $2840.706$                 \\ \bottomrule
\end{tabular}%
}
\label{tab:memory_analysis}
\end{table}

To validate the use of ``variation\_neighborhoods'' as a coarsening algorithm for all the results presented, we show an ablation study for different coarsening algorithms in Table \ref{tab:coarsening_method_ablation_node} and \ref{tab:coarsening_method_ablation_graph}.

%%%%%%%% Node Coarsening ablation %%%%%%%%%%
\begin{table}[ht]
\centering
\caption{\fontsize{9}{9}\selectfont{The table shows the FIT-GNN ablation study to compare various coarsening methods on \textit{Cora} and \textit{Chameleon} datasets, and the metrics for each dataset are accuracy (higher the better) and normalized MAE (lower the better), respectively.}}
\resizebox{0.8\textwidth}{!}{%
\begin{tabular}{@{}lrrrr@{}}
\toprule
& \multicolumn{2}{c}{\textit{Cora}} & \multicolumn{2}{c}{\textit{Chameleon}} \\ \cmidrule(l){2-3} \cmidrule(l){4-5}
\textbf{Coarsening Method} & $\mathbf{r = 0.1}$ & $\mathbf{r = 0.3}$ & $\mathbf{r = 0.1}$ & $\mathbf{r = 0.3}$ \\ \midrule
\textbf{variation\_neighborhoods} &$0.790 \pm 0.004$ & $0.800 \pm 0.003$ & \colorbox{green}{$\mathbf{0.496 \pm 0.005}$} & \colorbox{green}{$\mathbf{0.489 \pm 0.003}$} \\
\textbf{algebraic\_JC} & \colorbox{green}{$\mathbf{0.827 \pm 0.006}$} & \colorbox{green}{$\mathbf{0.804 \pm 0.003}$} & $0.544 \pm 0.005$ & $0.504 \pm 0.003$ \\
\textbf{kron} & $0.758 \pm 0.004$ & $0.800 \pm 0.002$ & $0.571 \pm 0.004$  & $0.580 \pm 0.013$ \\
\textbf{heavy\_edge} & $0.736 \pm 0.012$ & $0.773 \pm 0.006$ & $0.572 \pm 0.002$  & $0.531 \pm 0.002$ \\
\textbf{variation\_edges} & $0.484 \pm 0.006$ & $0.471 \pm 0.012$ & $0.513 \pm 0.003$  & $0.542 \pm 0.005$ \\
\textbf{variation\_cliques} & $0.751 \pm 0.012$ & $0.801 \pm 0.009$ & $0.674 \pm 0.009$  & $0.679 \pm 0.008$ \\ \bottomrule
\end{tabular}%
}
\label{tab:coarsening_method_ablation_node}
\end{table}

%%%%%%%% graph Coarsening ablation %%%%%%%%%%
\begin{table}[H]
\centering
\caption{\fontsize{9}{9}\selectfont{The table shows the FIT-GNN ablation study to compare various coarsening methods on \textit{PROTEINS} and \textit{ZINC (subset)} datasets. The metric for the \textit{PROTEINS} dataset is accuracy (higher is better), and for the \textit{ZINC (subset)} is normalized MAE (lower is better)}}
\resizebox{0.6\textwidth}{!}{%
\begin{tabular}{@{}lrrrr@{}}
\toprule
& \multicolumn{2}{c}{\textbf{PROTEINS}} & \multicolumn{2}{c}{\textbf{ZINC (subset)}} \\ \cmidrule(l){2-3} \cmidrule(l){4-5}
\textbf{Coarsening Method} & $\mathbf{r = 0.3}$ & $\mathbf{r = 0.5}$ & $\mathbf{r = 0.3}$ & $\mathbf{r = 0.5}$ \\ \midrule
\textbf{variation\_neighborhoods} & $0.652$ & \colorbox{green}{$\mathbf{0.739}$} & \colorbox{green}{$\mathbf{0.573}$} & \colorbox{green}{$\mathbf{0.620}$} \\
\textbf{algebraic\_JC} & \colorbox{green}{$\mathbf{0.826}$} & \colorbox{green}{$\mathbf{0.739}$} & $0.823$ & $0.810$ \\
\textbf{kron} & $0.522$ & $0.652$ & $0.663$ & $1.201$ \\
\textbf{heavy\_edge} & $0.565$ & $0.609$ & $2.114$ & $1.227$ \\
\textbf{variation\_edges} & $0.652$ & $0.565$ & $1.548$ & $1.883$ \\
\textbf{variation\_cliques} & $0.696$ & $0.652$ & $0.689$ & $0.742$ \\ \bottomrule
\end{tabular}%
}
\label{tab:coarsening_method_ablation_graph}
\end{table}

\section{Detailed Ablation Study on Node Regression Performance}
\label{sec:appendix_ablation}

To understand the counterintuitive performance gains observed in node regression tasks when using FIT-GNN, we conducted a comprehensive ablation study. Specifically, we isolated the source of the performance gap to determine why restricting inference to localized subgraphs outperforms full-graph inference.

\subsection{Impact of Inference Input vs. Training Regime}
Our first objective was to determine whether the performance gain stemmed from the training methodology itself or the structure of the input data (subgraphs) during inference. We evaluated two distinct setups on the Crocodile dataset using a GCN convolutional layer:
\begin{itemize}
    \item \textbf{Setup A:} Train on subgraphs $\rightarrow$ Infer on the full graph.
    \item \textbf{Setup B:} Train on the full graph $\rightarrow$ Infer on the full graph.
\end{itemize}

\begin{table}[ht]
    \centering
    \caption{Comparison of training and inference setups to isolate the source of performance gains.}
    \label{tab:inference_vs_training}
    \begin{tabular}{llc}
        \toprule
        \textbf{Train Setup} & \textbf{Inference Setup} & \textbf{Mean Absolute Error (MAE)} \\
        \midrule
        Full Graph & Full Graph & 0.852 \\
        Subgraphs & Full Graph & 0.865 \\
        Subgraphs (FIT-GNN) & Subgraphs & \textbf{0.364} \\
        \bottomrule
    \end{tabular}
\end{table}

As shown in Table \ref{tab:inference_vs_training}, the performance of Setup A and Setup B is nearly identical, indicating that training on subgraphs alone does not account for the drastic MAE reduction. The substantial performance leap in FIT-GNN only occurs when subgraphs are used as the inference input, confirming that the local structural input drives the improvement.

\subsection{Subgraph Optimization Landscape}
We further hypothesized that partitioning the graph into subgraphs presents a simpler optimization landscape for the model. To quantify this, we compared the variation of labels locally (within individual subgraphs) versus globally across the entire graph. We computed the Label Standard Deviation for node regression datasets and Label Entropy for node classification datasets.

\begin{table}[htbp]
    \centering
    \caption{Comparison of global label variation versus average local subgraph variation.}
    \label{tab:label_variation}
    \begin{tabular}{llrr}
        \toprule
        \textbf{Dataset} & \textbf{Metric Used} & \textbf{Global Variation} & \textbf{Subgraph Variation (Avg)} \\
        \midrule
        \textbf{Cora} & Entropy & 1.8311 & 0.1245 \\
        \textbf{Citeseer} & Entropy & 1.7533 & 0.1572 \\
        \textbf{Chameleon} & Standard Deviation & 2.1329 & 0.0689 \\
        \textbf{Squirrel} & Standard Deviation & 1.7639 & 0.1284 \\
        \bottomrule
    \end{tabular}
\end{table}

The results in Table \ref{tab:label_variation} demonstrate that the variation of labels within individual subgraphs is drastically lower than the global variation. By coarsening the graph, we create local contexts that are statistically more homogeneous, allowing the GNN to specialize more effectively and navigate a smoother optimization landscape during inference.

\subsection{Structural Information Loss and Implicit Adversarial Pruning}
Finally, we investigated the role of structural information loss. Using a coarsening ratio of $r=0.5$, we empirically computed the fraction of the 2nd-hop neighborhood lost for each node.

\begin{figure}[htbp]
    \centering
    \begin{subfigure}{0.48\textwidth}
        \centering
        \includegraphics[width=\linewidth]{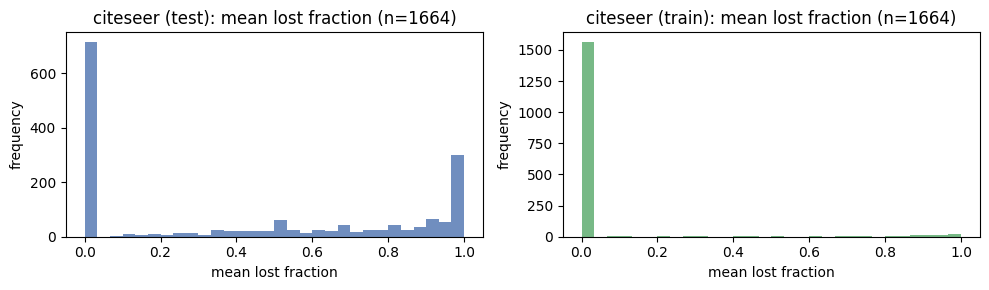}
        \caption{\textit{Citeseer} (Classification)}
        \label{fig:citeseer_lost}
    \end{subfigure}
    \hfill
    \begin{subfigure}{0.48\textwidth}
        \centering
        \includegraphics[width=\linewidth]{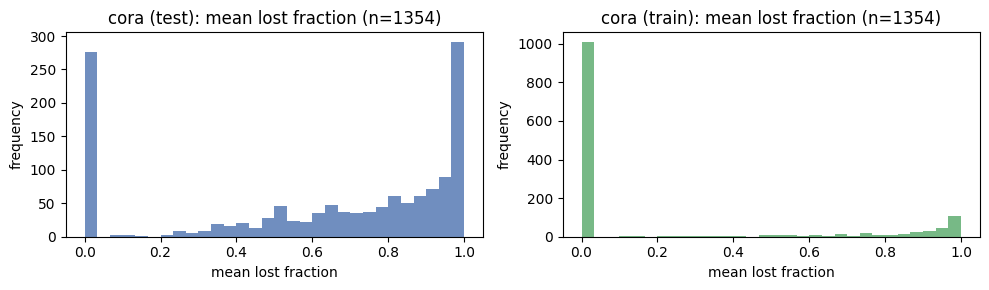}
        \caption{\textit{Cora} (Classification)}
        \label{fig:cora_lost}
    \end{subfigure}
    
    \vspace{1em} % Adds a little space between the top and bottom rows
    
    \begin{subfigure}{0.48\textwidth}
        \centering
        \includegraphics[width=\linewidth]{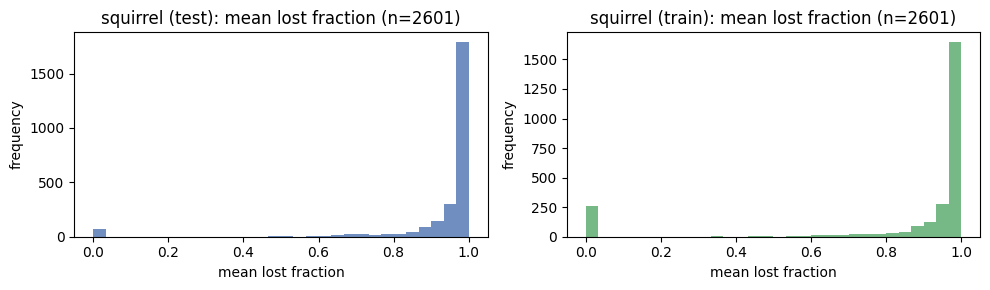}
        \caption{\textit{Squirrel} (Regression)}
        \label{fig:squirrel_lost}
    \end{subfigure}
    \hfill
    \begin{subfigure}{0.48\textwidth}
        \centering
        \includegraphics[width=\linewidth]{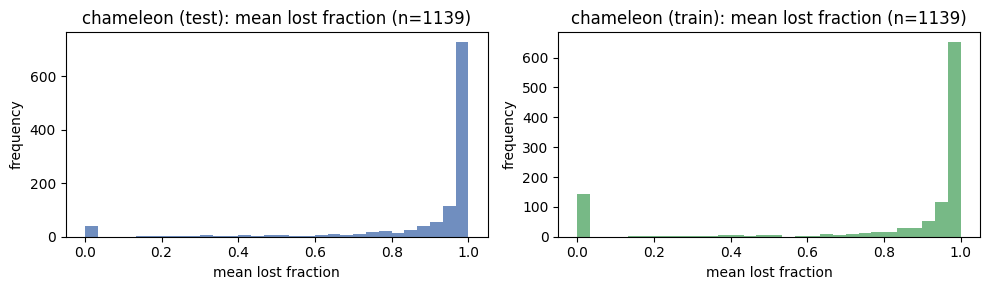}
        \caption{\textit{Chameleon} (Regression)}
        \label{fig:chameleon_lost}
    \end{subfigure}
    
    \caption{Histograms detailing the fraction of the 2nd-hop neighborhood lost for each node at a coarsening ratio of $r=0.5$. A distinct difference in distribution is visible between classification (a, b) and regression (c, d) datasets.}
    \label{fig:neighborhood_loss}
\end{figure}

\textbf{Observations and Interpretation:} The histograms in Figure \ref{fig:neighborhood_loss} reveal a stark contrast between task types. In the node classification datasets (Cora, Citeseer), a significant number of nodes retain their 2nd-hop neighborhood entirely (lost fraction close to 0). Conversely, in the node regression datasets (Squirrel, Chameleon), the vast majority of nodes lose a highly significant portion of their 2nd-hop neighborhood (lost fraction close to 1). 

Given the superior performance of FIT-GNN on these regression tasks, we interpret this structural loss as a form of \textbf{implicit adversarial pruning}. In specific heterophilic graphs, long-range information (such as 2nd-hop neighbors) acts as noise or an adversarial signal. The coarsening algorithms utilized by FIT-GNN implicitly filter out this distant noise. This pruning allows the model to fully exploit the low-variance, highly homogeneous local structures identified in Section \ref{sec:appendix_ablation}, thereby drastically reducing regression error.

\end{document}